\documentclass[sigconf]{acmart}

\usepackage{booktabs}
\usepackage{listings}
\usepackage{amsmath}
\usepackage{amsthm}
\usepackage{microtype}       
\usepackage[htt]{hyphenat}   
\usepackage{xurl}            
\usepackage{graphicx}
\usepackage{subcaption}
\usepackage{tikz}
\usetikzlibrary{positioning, fit, arrows.meta, backgrounds, calc}
\usepackage{tabularx}
\usepackage{rotating}      
\usepackage{ragged2e}      
\usepackage{rotfloat}
\usepackage{array}         
\newtheorem{theorem}{Theorem}

\newtheorem{proposition}{Proposition}

\newcommand{\Prb}{\mathbb{P}}

\begin{document}
\title{\texttt{proxymate}: Diagnosis and Adjustment of Proxy Estimates for Reliable Inference}






  \settopmatter{authorsperrow=4}

  \author{Alexandra N. M. Darmon}
  \email{alexdarmon@meta.com}
  \affiliation{%
    \institution{Meta}
    \country{}
  }

    \author{Deeksha Sinha}
    \authornote{Both authors contributed equally to this research.}
  \email{deekshasinha@meta.com}
  \affiliation{%
    \institution{Meta}
    \country{}
  }

  \author{Steven Wilkins-Reeves}
\authornotemark[1]
  \email{stevewr@meta.com}
  \affiliation{%
    \institution{Meta}
    \country{}
  }

  \author{Caner Gocmen}
  \email{caner@meta.com}
  \affiliation{%
    \institution{Meta}
    \country{}
  }

\authorsaddresses{}
\renewcommand{\shortauthors}{Darmon et al.}

\begin{abstract}
Proxy outcomes (such as short-term behavioral signals, model predictions, or surrogate endpoints) are frequently used in place of primary outcomes that are too slow to mature, rare, or challenging to measure directly. But valid inference on a proxy does not guarantee valid inference on the primary estimate as proxy-based estimates can be systematically biased in ways that are difficult to predict, leading to improperly calibrated confidence intervals.

We present \texttt{proxymate}, a framework and open-source Python package for proxy validation and adjustment. \texttt{proxymate} organizes into four levels: The \textit{Representativity Level} (population validity), the \textit{Unit Level} (measurement quality), the \textit{Estimate Level} (decision validity), and the \textit{Domain Level} (cross-domain transportability). Within each level, \texttt{proxymate} provides diagnostic checks, and targeted adjustment strategies that map specific failures to appropriate corrections.

At Meta, \texttt{proxymate} has been adopted by many different use cases, spanning experimentation, prevalence estimation, and monitoring use cases, all facing different proxy challenges (limited human review time, long maturation window of outcomes, low detectability) and showcasing the modularity of the framework. Across all products, \texttt{proxymate} assessed and corrected millions of proxy, primary unit comparisons. It has facilitated launches across multiple work streams including enabling quick decision making on thousands of experiments.

\end{abstract}

\begin{CCSXML}
<ccs2012>
<concept>
<concept_id>10002950.10003624.10003633.10010916</concept_id>
<concept_desc>Mathematics of computing~Experimentation</concept_desc>
<concept_significance>500</concept_significance>
</concept>
<concept>
<concept_id>10002950.10003648.10003688.10003689</concept_id>
<concept_desc>Mathematics of computing~Statistical inference</concept_desc>
<concept_significance>500</concept_significance>
</concept>
<concept>
<concept_id>10002950.10003648.10003688</concept_id>
<concept_desc>Mathematics of computing~Statistical paradigms</concept_desc>
<concept_significance>300</concept_significance>
</concept>
<concept>
<concept_id>10002950.10003648.10003662.10003666</concept_id>
<concept_desc>Mathematics of computing~Hypothesis testing and confidence interval computation</concept_desc>
<concept_significance>300</concept_significance>
</concept>
<concept>
<concept_id>10002950.10003648.10003704</concept_id>
<concept_desc>Mathematics of computing~Multivariate statistics</concept_desc>
<concept_significance>300</concept_significance>
</concept>
<concept>
<concept_id>10010147.10010257</concept_id>
<concept_desc>Computing methodologies~Machine learning</concept_desc>
<concept_significance>300</concept_significance>
</concept>
</ccs2012>
\end{CCSXML}

\ccsdesc[500]{Mathematics of computing~Experimentation}
\ccsdesc[500]{Mathematics of computing~Statistical inference}
\ccsdesc[300]{Mathematics of computing~Statistical paradigms}
\ccsdesc[300]{Mathematics of computing~Hypothesis testing and confidence interval computation}
\ccsdesc[300]{Mathematics of computing~Multivariate statistics}
\ccsdesc[300]{Computing methodologies~Machine learning}

\keywords{proxy estimates, surrogate endpoints, prediction-powered inference, meta-analysis, calibration}

\maketitle

\section{Introduction} 
In many scientific and industrial settings, the primary outcome of interest is difficult to measure. Outcomes may be delayed (e.g., 60-day attribution~\cite{sarig2025}), require time consuming human labeling (e.g., identifying content that violates an online platform's policies~\cite{nguyen2020clara}), or be observable only for a small, non-representative fraction of the population. As a result, practitioners frequently substitute a more readily available \emph{proxy outcome} and make decisions based on inference computed on that proxy. 

The use of proxies has been first explored, adopting the term surrogate endpoints, in medicine~\cite{prentice1989,buyse2000}. A surrogate in this context is a more readily available and perhaps less noisy outcome used in place of the primary outcome.  When no single surrogate is sufficient, a surrogate index, a combination of weak surrogates may be constructed; more commonly used in social sciences where the presence of a single strong surrogate for the desired outcome may not exist~\cite{athey2019}.  Proxy metrics in online experimentation often serve a similar role~\cite{gupta2019}. 

Even when a proxy is strongly correlated with the primary outcome, proxy-based point estimates can be systematically biased for the primary estimand and their confidence intervals miscalibrated. Two failure modes recur in practice. First, unit-level diagnostics computed on a labeled sample (calibration, accuracy, discrimination) can look adequate while the labeled sample itself is non-representative of the target population, so aggregate estimates on the full population are biased in ways the unit-level checks do not surface. Second, standard adjustment methods (e.g., Prediction-Powered Inference (PPI)~\cite{angelopoulos2023}, regression calibration, isotonic recalibration) inherit the assumptions of the sample they are fit on: applied without first verifying representativity or cross-domain stability, they can leave the bias intact or amplify it. The gap is not the individual adjustment methods, but the absence of a diagnostic layer that determines whether and which adjustment is warranted before it is applied.

Despite extensive methodological work, a persistent practical gap remains in effectively using proxy outcomes. Existing literature focuses on either \emph{adjustment} (correcting a proxy's flaws) or \emph{validation} (diagnosing whether a proxy is adequate, though the literature in this second area is more limited~\cite{tripuraneni2024}), but rarely connects the two. Furthermore, most surrogate validation frameworks assume a controlled experimental setting with a specific treatment, making them inapplicable to generic measurement scenarios such as population-level estimation or observational monitoring. No existing framework spans unit-level quality, aggregate-level decision validity, and cross-domain transportability while also addressing the representativity of the labeled sample. 

This paper introduces \texttt{proxymate} \footnote{Open-source at \url{https://github.com/facebookresearch/proxymate}}, a framework and open-source Python package for proxy metric validation and adjustment. While \texttt{proxymate} builds on established diagnostic methods (e.g., SMD for balance~\cite{austin2011}, ECE for calibration~\cite{naeini2015,guo2017}, $I^2$ for heterogeneity~\cite{higgins2003}) and established adjustment methods (e.g., PPI~\cite{angelopoulos2023}, Platt scaling~\cite{platt1999}, regression calibration~\cite{carroll2006}), the individual checks are not our contribution. Our contributions are the framework that connects them, the open-source package, and its application at Meta:

\begin{enumerate} 
\item \textbf{A four-level validation framework.} We organize proxy validation and correction into four levels: the \textit{Representativity Level} (population validity), the \textit{Unit Level} (measurement quality), the \textit{Estimate Level} (decision validity), and the \textit{Domain Level} (cross-domain transportability). 
Each level corresponds to a distinct statistical question, data structure, and set of diagnostics and adjustments (Section~\ref{sec:framework}).
\item \textbf{A theoretical diagnostic-to-adjustment bridge.} For each level, we map theoretically specific failure modes to the adjustment whose assumptions are satisfied, formally closing the gap between diagnosing a proxy problem and delivering a corrected proxy in production.
\item \textbf{Empirical application at scale.}
Through multiple production applications of \texttt{proxymate} at Meta, we demonstrate its value in identifying proxy failures, choosing among candidate proxies and performing guided adjustment. We share practical lessons about proxy adoptions or failures in production.
\item \textbf{An open-source package}. The package exposes modular validation checks, configurable planners, custom estimators for aggregate metrics such as Average Treatment Effects, and structured validation reports that can be consumed programmatically or rendered for practitioners.
\end{enumerate} 

At Meta, \texttt{proxymate} has been used across several production applications spanning experimentation, prevalence monitoring, and ML model training (six of which have been detailed in Table~\ref{tab:proxymate-impact}). These applications collectively span four distinct primary-outcome bottlenecks (long maturation windows, restricted measurement, low detectability, low sensitivity), and exercise every component of the framework: \textit{Representativity Level} validation in three, \textit{Unit Level} in all six, \textit{Estimate Level} in five, \textit{Domain Level} in three, and \textit{Estimate Level} adjustment in one. In four applications, a validated (or adjusted-then-validated) proxy was used, enabling ongoing experiment volume of approximately 7{,}500 experiments annually across the four (dominated by $\sim$7{,}000 Long Attribution Window (LAW) conversion experiments and $\sim$300 Nonpayment experiments, with the remaining two applications accounting for the balance). In the remaining two, all candidate proxies were rejected, preventing use of a personalization model trained on biased outcomes and redirecting proxy development toward an ensemble approach. Across all products where \texttt{proxymate} was applied, it assessed and corrected millions of proxy--primary unit comparisons. Sections~\ref{sec:ipl-case-study} and~\ref{sec:whatsapp-case-study} present two case studies in detail: a long-term outcome proxy validated across 22 experiments, and a scam classifier evaluated against time consuming human labels for prevalence estimation.

\subsection{Related Work} \label{sec:related_work} 

\paragraph{Surrogate endpoints and proxy inference.} The surrogate endpoint problem originates in biostatistics~\cite{prentice1989,buyse1998,buyse2000} and was extended to the social sciences via Athey et al.'s surrogate index~\cite{athey2019} and Tripuraneni et al.'s proxy selection criteria~\cite{tripuraneni2024}. \texttt{proxymate} complements selection by validating a given proxy across multiple levels of aggregation.

{\sloppy \paragraph{Prediction-powered inference and post-prediction adjustment.} Prediction-Powered Inference (PPI)~\cite{angelopoulos2023,angelopoulos2023ppipp} debiases model-based estimates with valid CIs; Wang et al.~\cite{wang2020} and Ji et al.~\cite{ji2025} broaden this into post-prediction inference; Wilkins-Reeves et al.~\cite{wilkinsreeves2026} model residual proxy bias as a cross-domain random effect. These are adjustment mechanisms; \texttt{proxymate} provides the diagnostic layer that determines \emph{whether} and \emph{which} adjustment to apply, and we prove that PPI applied to an already-adequate proxy strictly increases MSE (Theorem~\ref{thm:unnecessary}). 
We also draw on established diagnostics from the calibration~\cite{platt1999,guo2017,naeini2015}, covariate shift~\cite{shimodaira2000,austin2011}, and meta-analysis~\cite{dersimonian1986,higgins2003} literature. \par}

{\sloppy \paragraph{Existing software.} Causal-inference packages (DoWhy~\cite{sharma2020}, EconML) focus on treatment effect estimation; \texttt{ppi\_py} implements PPI without diagnostic checks; experimentation platforms (GrowthBook, Statsig) lack formal proxy validation. \texttt{proxymate} fills this gap with a unified diagnose-then-correct workflow across four levels. \par}

We present a more comprehensive discussion of the related fields of literature in Appendix \ref{sec:extended_related_work}. 

\section{Problem Setup and Definitions}
\label{sec:pb-setup}

We introduce our data setup. Let $Y_i$ denote the primary outcome of unit $i$, let $Y^*_i$ denote the unit's proxy outcome, and let $X_i$ denote any auxiliary covariates. We also let an optional covariate $K_i \in \{1, \dots, K\}$ denote the domain index of unit $i$. We use the term \emph{domain} to denote a distinct data-generating process, arising from structural shifts such as temporal drift, geographic expansion, or distinct interventions. This is a deliberate distinction from \emph{segments}, which are subpopulations defined by conditioning on covariates within a single domain (which may be indicated through the auxiliary covariates $X_i$).

We assume that within each domain $k$ (i.e.\ for units with $K_i = k$) the unit-level data are independently distributed such that $O_i = (Y_i, Y^*_i, X_i) \sim P_k$, where $P_k$ is the population distribution within domain $k$. Typically, domains $k = 1, \dots, K-1$ form a labelled \emph{historical} dataset where $Y_i$ is observed, whereas domain $K$ is an unlabelled \emph{target} domain where $Y_i$ is not available. We are interested in estimating a parameter $\theta_K$, a functional of the target population distribution $P_K$, $\theta_K = \Psi(P_K)$ (for instance $\theta_K = \mathbb{E}_{P_K}[g(Y)]$ for some unit-level function $g$, such as $g(Y)=Y$ for a mean or $g(Y)=\mathbf{1}\{Y=1\}$ for a prevalence). The corresponding proxy-side quantity is $\theta^*_k = \Psi^*(P^*_k)$ where $\Psi^*$ is a functional of the proxy population distribution; in many settings $\Psi^* = \Psi$ though this is not required. 

\emph{Notational convention (probability vs.\ distribution).} Uppercase $P$ with a subscript (e.g., $P_k$) denotes a probability \emph{distribution} (population measure); the corresponding lowercase $p_k(x)$ denote its density (or PMF). We reserve $\mathbb{P}_k(\cdot)$ for the probability of an \emph{event} under the measure $P_k$, e.g., $\mathbb{P}_k(Y = 1 \mid X = x)$ or $\mathbb{P}_k(\theta \in \text{CI})$. Expectations under a distribution $P_k$ are written $\mathbb{E}_{P_k}[\cdot]$. We drop the subscript $k$ when there is no ambiguity about the domain.

The core challenge is that $\hat{\theta}^*_K$ is generally a biased estimator of $\theta_K$, from two sources: (i) \emph{measurement error}, where $Y^*$ is an imperfect representation of $Y$ at the unit level; and (ii) \emph{distribution shift}, where $P_K$ differs from the historical $P_{1}, \dots, P_{K-1}$, so the proxy-primary relationship observed historically may not hold in the target domain.


\subsection{Validation data}
Proxy validation relies on the historical domains $k = 1, \dots, K-1$, where both the primary outcome $Y$ and the proxy outcome $Y^*$ are observed. Within each historical domain $k$, we refer to the data containing paired $(Y, Y^*, X)$ observations as the \emph{validation data} $\mathcal{V}_k \sim P_k$. In the simplest case, $\mathcal{V}_k$ is the full domain population $P_k$ itself, or a random sample from it. In general, the outcomes $Y, Y^*$ may undergo some missingness and may not be observed from all units. If there is a non-uniform missingness, a first step is required to weight the population with respect to the target population of interest. 


The representativity of $\mathcal{V}_k$ within domain $k$ is a separate concern from whether the proxy-primary relationship generalizes from historical domains to the target domain $K$. The former is a within-domain population validity issue addressed by the \textit{Representativity Level}; the latter is a cross-domain transportability question addressed by the \textit{Domain Level}.

\section{The \texttt{proxymate} Framework}
\label{sec:framework}

\texttt{Proxymate} decomposes proxy diagnosis and adjustment into four levels. The first (Representativity) level addresses \emph{population validity}: whether the validation data on which all checks are computed should be reweighted before use. The remaining three levels address \emph{proxy quality} at increasing levels of aggregation. \textit{Unit Level} diagnoses whether the proxy tracks the primary at the \emph{individual} level. \textit{Estimate Level} diagnoses whether the proxy estimate gives the same \emph{aggregate answer} as the primary estimate within a single domain. \textit{Domain Level} diagnoses whether the proxy-primary relationship is \emph{stable across domains}.

Each level pairs a diagnostic question with the family of adjustments whose assumptions the diagnostics test. Further, each level plays a distinct role: collapsing \textit{Unit} and \textit{Estimate} would obscure that individual errors can systematically compound in aggregation, while collapsing \textit{Estimate} and \textit{Domain} would conflate within-domain bias with cross-domain instability. 

\begin{figure}[t]
\centering
\includegraphics[width=\columnwidth]{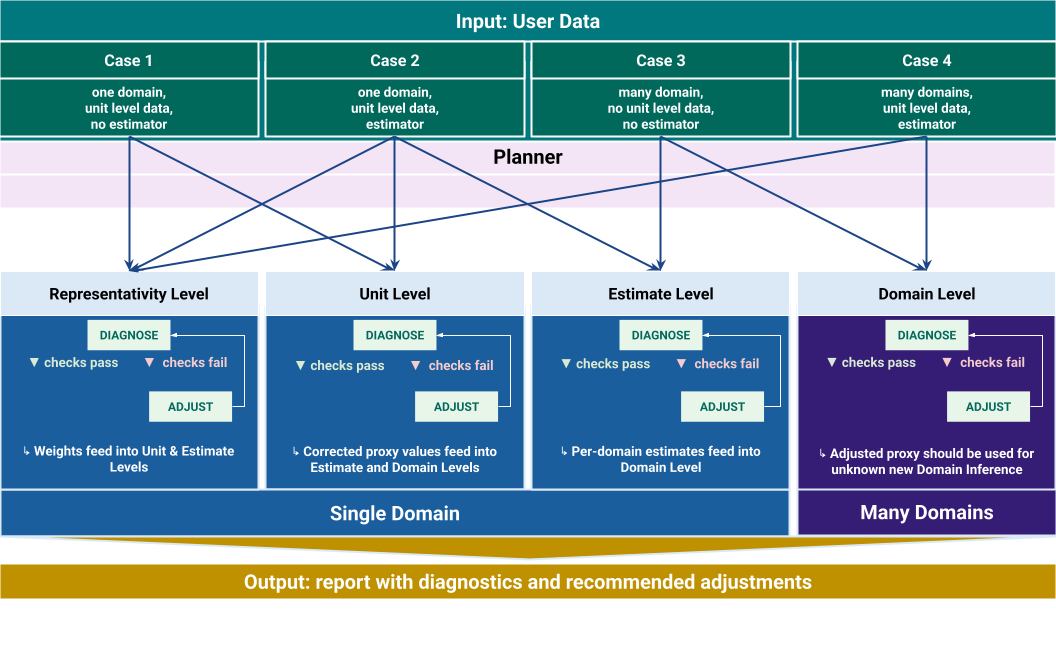}
\caption{The \texttt{proxymate} framework overview.}
\label{fig:framework}
\end{figure}


\paragraph{Segment-level validation.}
A proxy that passes on the overall population may fail for specific segments (subpopulations defined by conditioning on covariates within a single domain). 
\texttt{Proxymate} supports running all \textit{\textit{Unit}} and \textit{Estimate Level} checks per segment, and segment-level results can be aggregated into a \textit{Domain Level} meta-analysis to assess stability across subpopulations.

\paragraph{Diagnosis and Correction.} Blind adjustment can inflate MSE (Theorem~\ref{thm:unnecessary} in Appendix~\ref{app:bridge-proofs}) or amplify bias under model misspecification; the diagnostic layer selects an adjustment only when its assumptions hold. Table~\ref{tab:bridge} summarises the diagnostic-to-adjustment map.



\subsection{\textit{Representativity Level}: Population Validity}
\label{sec:representativity}
\paragraph{Should we weight the validation data, and if so, which weights?} Proxy validation in each domain $k$ relies on validation data $\mathcal{V}_k$ containing paired observations $(Y_i, Y^*_i, X_i)$. As defined in Section~\ref{sec:pb-setup}, $\mathcal{V}_k$ may be the full domain population $P_k$ itself, a random sample from it, or a separate process. The key requirement is that after the outcomes $(Y_i,Y_i^*)$ are representative of the target population $P_k$ after weighting. If no correction is needed the \textit{Representativity Level} is skipped, but when it is violated (e.g., a review queue over-representing flagged content) all downstream diagnostics are computed on the wrong distribution and may not reflect true proxy quality. IW reweighting is a sample-level correction: it makes any estimate computed on $\mathcal{V}_k$, primary or proxy, target the population $P_k$ rather than the biased sub-sample.


\paragraph{Importance-weighted estimation.} When the outcomes are not representative of $P_k$, importance weighting provides a principled correction. One approach leverages a \emph{covariate shift assumption}, involves $(Y_i, Y_i^*)|X_i = x$ regardless of whether $(Y_i,Y_i^*)$ are observed. This allows for correction based on estimating the importance weights conditional on $x$. Let $R_i = 1$ denote that the primary outcome $Y_i$ is observed for unit $i$ (i.e., unit $i$ belongs to the validation set $\mathcal{V}_k$), and $R_i = 0$ otherwise (the primary is missing, though the proxy $Y^*_i$ may still be available, though this may also be generalized to where $Y_i^*$ also has it's own missingness indicator $R_i^*$). Then $\theta_k = \mathbb{E}_{P_k}[g(Y)]$ can be recovered from $\mathcal{V}_k$ by reweighting~\cite{horvitz1952,shimodaira2000}:
$ \mathbb{E}_{P_k}[g(Y)] = \mathbb{E}_{P_k}\!\left[w(X) \cdot g(Y)|R= 1\right]$ where $w(X) = \frac{p_k(X)}{p_k(X|R = 1)}$ are the importance weights. Throughout, the unit-level function $g$ (identity for a mean, indicator for a prevalence, etc.) is used consistently in both the body and appendix.
The resulting weighted estimator is unbiased when the weights are correctly specified, but its variance increases with the variability of the weights.

\paragraph{Known vs.\ modeled weights.} The reliability of the correction depends on how the weights are obtained. \emph{Known weights} arise from design-based sampling, where inclusion probabilities are set by the practitioner (e.g., stratified sampling with known rates, or a review queue with known selection criteria). These weights are correct by construction. \emph{Modeled weights} arise when the selection mechanism is unknown and must be estimated from observed covariates, typically via a propensity model~\cite{rosenbaum1983}. Modeled weights can exhibit high variance when the validation sample and domain population diverge substantially, a phenomenon quantified by the effective sample size (ESS)~\cite{kish1965}. 

\subsubsection{Diagnostics.} \texttt{proxymate} provides five representativity checks (Table~\ref{tab:repr_checks} in Appendix \ref{sec:app_complete_checks}). \textbf{Sample representativity:} Is $\mathcal{V}_k$ drawn from the same covariate distribution as $P_k$? This is measured by standardized mean differences (SMD)~\cite{austin2011,normand2001} and the two-sample Kolmogorov-Smirnov (KS) statistic per covariate; an SMD below 0.1 is considered negligible imbalance~\cite{austin2011}. \textbf{Effective sample size:} Are the importance weights stable? This is measured by Kish's ESS~\cite{kish1965}. \textbf{Weight extremity:} Are any weights dominant? This is measured by the max/min ratio and coefficient of variation. \textbf{Weighting effectiveness:} Does importance weighting (IW) actually reduce the covariate gap? This is measured by the reduction in SMD after reweighting. \textbf{Coverage:} What fraction of units in $P_k$ are represented in $\mathcal{V}_k$? This is measured by $|\mathcal{V}_k|/|P_k|$.

\subsubsection{Adjustments.} The \textit{Representativity Level} produces one of three outcomes: 
\begin{itemize}
    \item \textbf{No Gap:} the data is representative and no weighting is needed.
    \item \textbf{IW Adjustment:} a gap is detected and importance weights are applied to all downstream checks. All \textit{\textit{Unit}} \textit{Level} checks, \textit{Estimate Level} aggregations, and per-domain estimates entering the \textit{Domain Level} meta-analysis inherit the corrected distribution.
    \item \textbf{Gap is too severe:} the gap is too severe for weighting to correct, and the framework flags the validation as unreliable for the full domain population, recommending fallback strategies (targeted label acquisition, narrowing the target population, sensitivity analysis, or segment-restricted validation described in Appendix~\ref{app:weighting_fails}). Case Study~B (Section \ref{sec:whatsapp-case-study}) illustrates a scenario where the \textit{Representativity Level} detects a severe gap that IW cannot fully correct.
\end{itemize}

\subsection{\textit{Unit Level}: Measurement Quality}

\paragraph{Does the proxy track the primary at the individual level?} The \textit{Unit Level} evaluates the unit-level discrepancy between $Y_i$ and $Y^*_i$ using the validation data $\mathcal{V}_k$ within a historical domain $k$. 

\subsubsection{Diagnostics.}

The \textit{Unit Level} evaluates six complementary properties of unit-level proxy quality. \textbf{Accuracy:} Does the proxy have a systematic bias? This is tested via Welch's t-test \cite{welch1947} on $\mathbb{E}[Y^* - Y]$, with optional importance weights.
\textbf{Precision:} How noisy is the proxy? This is measured by $R^2$, RMSE, and Pearson correlation \cite{pearson1895}.
\textbf{Agreement:} Do values agree on the identity line? This is measured using Bland-Altman \cite{bland1986}, directional concordance and Lin's Concordance Correlation Coefficient (CCC) \cite{lin1989} which decomposes as $\text{CCC} = \rho \cdot C_b$, separating correlation from location/scale mismatch.
\textbf{Calibration:} Do predicted probabilities match observed frequencies (essential for prevalence estimation)? This is measured by Expected Calibration Error (ECE) \cite{naeini2015} and the Brier score \cite{brier1950}.
\textbf{Discrimination:} Does the proxy correctly rank units? This is measured by Kendall's $\tau$ \cite{kendall1938} for continuous outcomes and Receiver Operating Characteristic - Area Under the Curve (ROC AUC) \cite{hanley1982} for binary. Note that this is independent of calibration.
\textbf{Distributional fidelity:} Do the marginal distributions match? This is measured by KS test \cite{kolmogorov1933}, Total Variation (TV) distance, and Wasserstein-1 \cite{villani2009}. A complete list of the \textit{Unit Level} checks can be found in Table~\ref{tab:unit_checks} in Appendix \ref{sec:app_complete_checks}.

\subsubsection{Adjustments.}

When \textit{\textit{Unit}} \textit{Level} diagnostics reveal failures, \texttt{proxymate} maps each failure mode to a targeted adjustment:

\begin{itemize}
\item \textbf{Systematic bias} (accuracy failure): Additive bias adjustment subtracts the estimated mean difference $\mathbb{E}[Y^* - Y]$ from proxy scores.
\item \textbf{Miscalibration} (calibration failure): Platt scaling \cite{platt1999} fits a logistic regression $\Prb(Y=1 \mid Y^*) = s(aY^* + b)$, where $s(z)=1/(1+e^{-z})$ is the logistic sigmoid, appropriate when miscalibration is approximately monotone. Isotonic regression \cite{zadrozny2002} fits a non-parametric monotone function and is more flexible but requires a larger labeled sample. For proxies that must be calibrated across subgroups (e.g., different content types or user segments), multicalibration methods such as MCGrad \cite{tax2026mcgrad} provide scalable post-hoc calibration that ensures calibration holds within data-driven subgroups, not just globally.
\item \textbf{Poor discrimination}: This cannot be fixed by post-hoc adjustment and requires proxy replacement or model retraining.
\end{itemize}


\subsection{\textit{Estimate Level}: Decision Validity}

\paragraph{Does the proxy lead to the same aggregate conclusion as the primary within a single domain?} The \textit{Estimate Level} assesses whether the proxy is reliable for aggregate inference (point estimates and confidence intervals) within a single domain. Even if a proxy is well-calibrated at the unit level, individual errors can compound or cancel in aggregation in unpredictable ways. A proxy that is noisy but unbiased at the unit level may produce reliable aggregate estimates; a proxy that is accurate on average may still produce biased aggregates if its errors correlate with treatment assignment or covariates. When the \textit{Representativity Level} has identified a gap, all aggregate estimates at this level are computed using the importance-weighted validation data $\mathcal{V}_k$.


The \textit{Estimate Level} checks are computed over $B$ bootstrapped samples from the domain.
Let $\hat\theta_k^j$ and $\hat\theta^{*j}_k$ denote the primary and proxy aggregate estimates computed over a bootstrapped sample indexed $j$ in a given domain $k$. We suppress the index $k$ when it is evident from the context. 

\subsubsection{Diagnostics.}

\textbf{Aggregate bias:} Is the proxy aggregate close to the primary? This is measured by the relative absolute error $|\hat\theta^j - \hat\theta^{*j}| / |\hat\theta^j|$.
\textbf{Inferential validity:} Does the proxy CI contain the primary estimate? 
\textbf{Sign agreement:} Do primary and proxy have the same sign? This is performed as a binary directional check. A complete list of the \textit{Estimate Level} checks can be found in Table~\ref{tab:Estimate_checks} in Appendix \ref{sec:app_complete_checks}.

\subsubsection{Adjustments.}

\begin{itemize}
\item \textbf{Aggregate bias} (bias failure): Prediction-Powered Inference (PPI)~\cite{angelopoulos2023} corrects the proxy estimate using the validation data. Let $\hat\theta^*_{\text{all}}$ denote the proxy estimate computed on the full domain population (where only $Y^*$ is observed), and let $\hat\theta_{\mathcal{V}}$ and $\hat\theta^*_{\mathcal{V}}$ denote the primary and proxy estimates computed on the validation data $\mathcal{V}_k$ (where both $Y$ and $Y^*$ are observed). The PPI estimator is:
\[
\hat\theta^{\text{PPI}}_k = \hat\theta^*_{\text{all}} + \left(\hat\theta_{\mathcal{V}_k} - \hat\theta^*_{\mathcal{V}_k}\right)
\]

The adjustment term $(\hat\theta_{\mathcal{V}_k} - \hat\theta^*_{\mathcal{V}_k})$ estimates the systematic bias of the proxy from the validation data and removes it, while the full-population proxy term $\hat\theta^*_{\text{all}}$ provides variance reduction. Under the assumption that $\mathcal{V}_k$ is representative of $P_k$ (or has been reweighted to be via the \textit{Representativity Level}), the PPI estimator is unbiased for the primary estimand and its confidence interval achieves nominal coverage (see Angelopoulos et al.~\cite{angelopoulos2023}).

\item \textbf{CI miscalibration} (coverage failure): The PPI confidence interval formulas explicitly account for the uncertainty in the bias adjustment, yielding valid coverage even when the proxy's native CI is overconfident.

\end{itemize}

\subsection{\textit{Domain Level}: Cross-\textit{Domain} Transportability}

\emph{Does the proxy-primary relationship hold across many domains?} The \textit{Domain Level} tests stability of the proxy-primary bias $\phi_k := \theta^*_k - \theta_k$ across $k=1,\dots,K-1$ historical contexts. We model the population estimands as $\theta_{k} = \gamma_0 + \gamma_1\,\theta^*_k + \varepsilon_k$ with $\varepsilon_k \stackrel{\text{iid}}{\sim} \mathcal{N}(0, \tau^2)$~\cite{dersimonian1986,higgins2003}, fit via inverse-variance-weighted meta-regression~\cite{wilkinsreeves2026}. Naive OLS on the noisy estimates is inconsistent due to attenuation from $(\sigma_k^*)^2$).   Per-domain sampling variances can be estimated when unit-level data are available for each domain; when unavailable, \texttt{proxymate} falls back to unweighted OLS. The domain-level inverse-variance weights $w_k=1/\sigma^2_{k}$ are distinct from the unit-level IW weights $w(X)$ at the \textit{Representativity Level} and never co-occur; $\hat\tau^2$ estimation via DerSimonian-Laird or REML is in Appendix~\ref{app:bridge-domain}.

\subsubsection{Diagnostics.}

\textbf{Aggregate association:} Does a large proxy effect predict a large primary effect? This is measured by precision-weighted $R^2$, generalizing Buyse's $R^2_{\text{trial}}$ \cite{buyse2000} to arbitrary estimands.
\textbf{Heterogeneity:} Is the relationship stable across domains? This is measured by Cochran's Q \cite{cochran1954}, $I^2$ \cite{higgins2003}, and $\tau^2$ \cite{dersimonian1986}. High heterogeneity means the relationship may not transport to the target domain.
\textbf{Prediction accuracy:} Can we predict the primary in a held-out domain? This is measured by Leave-one-out (LOO) cross-validation MSE on the precision-weighted regression.
\texttt{proxymate} also reports effect sensitivity ($\gamma_1 \approx 1$?) and directional alignment ($\text{sign}(\hat\theta_Y) = \text{sign}(\hat\theta^*_{Y^*})$?). A complete list of the \textit{Domain Level} checks can be found in Table~\ref{tab:domain_checks} in Appendix \ref{sec:app_complete_checks}.

\subsubsection{Adjustments.}

\begin{itemize}
\item \textbf{Systematic cross-domain bias} (intercept $\gamma_0 \neq 0$ or slope $\gamma_1 \neq 1$): Regression calibration fits the linear mapping from proxy to primary estimates across historical domains and uses it to predict the primary estimate in the target domain. The calibrated estimate is:
\[
\hat\theta^{\text{cal}}_K = \hat\gamma_0 + \hat\gamma_1 \hat\theta^*_K
\]
Wilkins-Reeves et al.~\cite{wilkinsreeves2026} show that under the random-effects meta-analytic model, the calibrated confidence interval
achieves nominal coverage for the primary estimand $\theta_K$ as $K \to \infty$, with width determined by both the proxy's sampling error and the residual cross-domain heterogeneity.

\item \textbf{\textit{Estimate}-level adjustment} \cite{wilkinsreeves2026}: A complementary approach models the residual proxy bias directly as $\phi_k \sim \mathcal{N}(\rho, \gamma^2)$, where $\rho$ is the mean bias across historical domains and $\gamma^2$ is its between-domain variance. Note that $\gamma^2$ (variance of the \emph{bias} $\phi_k$) is a different quantity from $\tau^2$ (variance of the \emph{primary} $\theta_{k}$ around the regression line \cite{wilkinsreeves2026}); the two heterogeneity parameters live in the two adjustment formulations respectively and are not interchangeable. Both parameters are estimated by method of moments from the within-domain differences $d_k = \hat\theta^*_k - \hat\theta_k$.

This approach has two advantages over regression calibration: (i) it requires only aggregate estimates and covariances from historical domains (no individual-level data), making it applicable when only summary statistics are retained; (ii) it can be layered on top of any proxy estimator (including PPI-corrected ones), adjusting for whatever residual bias remains after the initial adjustment. For small $K$, a domain bootstrap procedure propagates uncertainty in $(\hat\rho, \hat\gamma^2)$ into the interval width. \texttt{proxymate} integrates both regression calibration and estimate-level adjustment as \textit{Domain Level} adjustments.

\item \textbf{High heterogeneity} (unstable relationship): When the relationship varies unpredictably across domains, point prediction is unreliable. The residual variance cannot be corrected using the above two proposed adjustments

\end{itemize}

In practice, regression calibration and the estimate-level adjustment can be applied sequentially: first calibrate the slope and intercept, then inflate the interval for residual heterogeneity.

\subsubsection{\textit{Domain Level} Limitations}

The meta-analytic model requires a sufficient number of historical domains for reliable heterogeneity estimation. With fewer than 5 domains, \textit{Domain Level} diagnostics are unreliable and the framework flags them accordingly. For \emph{point estimates} of $(\hat\gamma_0, \hat\gamma_1, \hat\tau^2)$, the method-of-moments (DerSimonian--Laird) estimator is typically stable with 10 or more domains. For \emph{interval coverage}, however, the plug-in variance under-represents finite-$K$ estimation uncertainty (Bridge~\ref{app:bridge-domain}), and we recommend a cluster/domain bootstrap over the $K-1$ historical domains, as in \cite{wilkinsreeves2026}, whenever $K \lesssim 30$


\begin{table*}[t]
\caption{Diagnostic-to-adjustment bridge. Each row maps a failure mode to the diagnostic that detects it, the adjustment that addresses it, and the formal guarantee.}
\label{tab:bridge}
\small
\setlength{\tabcolsep}{4pt}
\begin{tabular}{@{}p{4cm}p{1.8cm}p{3.5cm}p{3.2cm}p{2.2cm}@{}}
\toprule
\textbf{Failure Mode} & \textbf{\textit{Level}} & \textbf{Diagnosed By} & \textbf{Corrected By} & \textbf{Guarantee} \\
\midrule
Labeling gap & Representativity & SMD, ESS & IW reweighting & Bridge~\ref{app:bridge-ipw}; \cite{horvitz1952,shimodaira2000} \\
\midrule
Systematic unit bias & \textit{\textit{Unit}} & T-test, MAPE & Additive bias adjustment & Bridge~\ref{app:bridge-additive-bias} \\
Miscalibration & \textit{\textit{Unit}} & ECE, Brier & Platt / Isotonic / MCGrad & Bridge~\ref{app:bridge-calibration}; \cite{platt1999,zadrozny2002,tax2026mcgrad} \\
Poor discrimination & \textit{\textit{Unit}} & ROC AUC, Kendall's $\tau$ & Proxy replacement & None \\
\midrule
Aggregate bias & \textit{Estimate} & Rel.\ abs.\ error, bootstrap cov. & PPI & \cite{angelopoulos2023} \\
CI miscalibration & \textit{Estimate} & Bootstrap coverage rate 
& PPI intervals & \cite{angelopoulos2023} \\
No bias & \textit{Estimate} & Coverage $\approx$ nominal & No adjustment & Theorem~\ref{thm:unnecessary} \\
\midrule
Residual bias after adjustment & \textit{Domain} & LOO overlap rate & Est.-level adj.  & \cite{wilkinsreeves2026} \\
Cross-domain bias & \textit{Domain} & $R^2$, slope, LOO error & Regression calibration & \cite{wilkinsreeves2026} \\
High heterogeneity & \textit{Domain} & $I^2$, $\tau^2$ & Widen pred.\ intervals & \cite{wilkinsreeves2026} \\
Model misspecification & \textit{Domain} & LOO error, $R^2$ low & Warn; report bounds & --  \\
\\
\bottomrule
\end{tabular}
\end{table*}

\subsection{Extensions}
\noindent \texttt{proxymate} also includes a \emph{validation planner} that maps recurring proxy challenges (long maturation primary, restricted labels, low detectability, low sensitivity) to recommended check subsets.
Further, each check is accompanied by configurable default thresholds which determines the passing criteria for the proxy. These thresholds are based on established conventions (such as SMD~$<0.1$~\cite{austin2011}, $I^2<25\%$~\cite{higgins2003}) and practitioner experience (Appendix~\ref{sec:validationplanner}). A team making a high-stakes policy decision should tighten thresholds, while a team running exploratory analysis may relax them.

\section{Applying \texttt{proxymate} at Meta}
We now discuss how \texttt{proxymate} is used at Meta and the learnings from implementing \texttt{proxymate} to several applications. \footnote{Reported percentages, counts, and per-arm statistics in this section and its case studies have been perturbed to preserve confidentiality; relative magnitudes and qualitative conclusions are preserved.}

\subsection{System and Workflow}
\label{subsec:system-workflow}
\texttt{proxymate} is implemented as a modular Python library. All validation checks inherit from a generic abstract base class parameterized by result and input types. New checks can be added by subclassing this base without modifying the orchestrator. The data handler provides a uniform interface for using in-memory data (Pandas dataframes). The comprehensive list of diagnostics available in the package is presented in Tables~\ref{tab:repr_checks} - \ref{tab:domain_checks}. The validation planner described in Section~\ref{sec:framework} (details in Appendix~\ref{sec:validationplanner}) is implemented as a configurable routing table; practitioners can also select checks manually or programmatically. Artificial Intelligence based tools were used to assist in development of the package. Figure~\ref{fig:framework} illustrates the end-to-end \texttt{proxymate} workflow. Appendix~\ref{sec:extendedworkflow}  walks through it in detail.





\subsection{Application overview}

The \texttt{proxymate} framework has been used for multiple production measurement workflows at Meta.
Table~\ref{tab:proxymate-impact} summarizes \texttt{proxymate}'s
use across six production applications at Meta, grouped by measurement context. The applications span four distinct \emph{primary-outcome bottlenecks}: 
Long maturation primary, Restricted Labels, Low detectability and Low sensitivity,
and collectively exercise every component of the framework (Section~\ref{sec:framework}): \textit{Representativity Level} validation in three, \textit{\textit{Unit}} \textit{Level} in all six, \textit{Estimate Level} in five, \textit{Domain Level} in three, and \textit{Estimate Level} adjustment in one. 

Table~\ref{tab:proxymate-impact} illustrates \texttt{proxymate}'s \emph{bidirectional} role. In four applications a validated proxy was used, unlocking strategic decision making and enabling ongoing experiment volume of $\sim 7{,}000$ Long Attribution Window (LAW) conversion experiments annually. In the remaining two, all candidate proxies were rejected, preventing biased downstream use in Ads quality (a personalization model that would otherwise have been trained on biased outcomes) and redirecting proxy development toward an ensemble approach in Scams. Sections~\ref{sec:ipl-case-study} and~\ref{sec:whatsapp-case-study} walk
through one validated case (Nonpayment) and one rejected case (Scams) in detail.

%
%

\begin{table*}[t]
\centering
\caption{Impact of \texttt{proxymate} at Meta across six deployed applications.}
\Description{Pivoted table summarising applications of \texttt{proxymate} at Meta. Each column corresponds to one application, grouped
  by measurement context: long-term nonpayment, mobile app performance,
  and long-attribution-window conversions are experimentation
  applications; user account type measurement and scams on a messaging
  platform are prevalence-monitoring applications; ads quality
  measurement feeds ML model training. Rows report attributes of each
  application: the measurement context; the challenge preventing direct
  use of the primary outcome; a description of the measurement problem;
  which \texttt{proxymate} validation components were applied (unit-level,
  estimate-level, domain-level validation, and estimate-level
  adjustment); the data scale of the validation; whether the proxy was
  validated, validated after adjustment, or all candidates rejected;
  and the resulting downstream business or measurement impact.}
\label{tab:proxymate-impact}
\scriptsize
\renewcommand{\arraystretch}{1.2}
\setlength{\tabcolsep}{3pt}
\newcolumntype{Y}{>{\RaggedRight\arraybackslash}X}
\begin{tabularx}{\textwidth}{@{}
  >{\bfseries\RaggedRight\arraybackslash}p{1.7cm}
  Y Y Y Y Y Y @{}}
\toprule
\textbf{Application} &
\textbf{Nonpayment} &
\textbf{Mobile perf} &
\textbf{LAW conversions} &
\textbf{Account type} &
\textbf{Scams} &
\textbf{Ads quality} \\
\cmidrule(l){2-7}
 &
Long-term nonpayment &
Mobile app performance &
Long-attribution-window (LAW) conversions &
User account type measurement &
Scams on a messaging platform &
Ads quality measurement \\
\midrule

Measurement Context &
Experimentation &
Experimentation &
Experimentation &
Prevalence monitoring &
Prevalence monitoring &
ML model training \\
\midrule

Proxy challenge(s) &
Long maturation primary&
Low detectability, Restricted labels &
Long maturation primary&
Low sensitivity, Restricted labels &
Restricted labels &
Low sensitivity, Restricted labels \\
\midrule

Problem description &
Net-of-nonpayment revenue (primary outcome) matures only 60+ days post-treatment, blocking quicker experiment readouts. &
End-user A/B testing is not permitted on app, so the primary outcome (e.g., startup time) is unobtainable; employee A/B metrics have detectability and representativity gaps. &
LAW conversions (primary outcome) mature in 21 days versus a 7-day readout target; experiments historically ignored LAW impact, causing production regressions. &
The account type label (primary outcome) is human-labeled, unavailable on most units, and the labeled subset is non-uniform, and weighted prevalence estimates need defensible validation. &
The scam label (primary outcome) is human-labeled and unavailable on most units; we compare proxy options (one human-labeled, two LLM-based) for population-level prevalence. &
Ads quality signal (primary outcome) is unavailable at user level due to anonymization; user-level outcomes are needed for heterogeneous-effect personalization models. \\
\midrule

\multicolumn{7}{@{}l}{\textit{\texttt{proxymate} components applied}} \\
\cmidrule(r){1-7}

Representativity validation &
 &
\multicolumn{1}{c}{\checkmark} &
 &
\multicolumn{1}{c}{\checkmark} &
\multicolumn{1}{c}{\checkmark} &
 \\

Unit level validation &
\multicolumn{1}{c}{\checkmark} &
\multicolumn{1}{c}{\checkmark} &
\multicolumn{1}{c}{\checkmark} &
\multicolumn{1}{c}{\checkmark} &
\multicolumn{1}{c}{\checkmark} &
\multicolumn{1}{c}{\checkmark} \\

Estimate level validation &
\multicolumn{1}{c}{\checkmark} &
\multicolumn{1}{c}{\checkmark} &
\multicolumn{1}{c}{\checkmark} &
\multicolumn{1}{c}{\checkmark} &
\multicolumn{1}{c}{\checkmark} &
 \\

Domain level validation &
\multicolumn{1}{c}{\checkmark} &
 &
\multicolumn{1}{c}{\checkmark} &
 &
 &
\multicolumn{1}{c}{\checkmark} \\

Estimate level adjustment &
 &
\multicolumn{1}{c}{\checkmark} &
 &
 &
 &
 \\
\midrule

Data Scale &
100K--100M rows each for $\sim$100 experiments &
$\sim$500K rows each for 3 experiments &
$\sim$35K units across 70 experiments &
1--10M rows per time period across 50--100 time periods &
10--50K rows&
$\sim$10K rows consisting of (experiment, date, user segment) combinations \\
\midrule

Impact on proxy use &
Proxy validated &
Proxy adjusted and validated &
Proxy validated &
Proxy validated &
All proxy candidates rejected &
All proxy candidates rejected \\
\midrule

Downstream impact from proxy use / validation &
Validated proxy now powers all $\sim$300 annual experiments in the workstream and has led to significant organizational impact through multiple landed decisions. &
Validated proxy now drives $\sim$200 experiments annually and several launch decisions with estimated  multiple billion incremental user sessions. &
Availability of a valid proxy enables measurement of LAW conversions impact in $\sim$7{,}000 experiments annually. &
Validated weighting methodology now used for account type trend reporting to company leadership. &
Validation identified segments with varying proxy quality motivating an ensemble approach for further proxy development. &
Prevented deployment of a personalization model trained on biased proxy outcomes. \\
\bottomrule
\end{tabularx}
\end{table*}

\subsection{Case Study A: Long-term Outcome Prediction}
\label{sec:ipl-case-study}

\paragraph{Problem and setup.}
A financial risk team at Meta uses a nonpayment outcome that matures 60+ days post-treatment which is too slow for experiment decisions. An ML model was built to predict the matured outcome immediately, creating a candidate proxy for experiment decisions~\cite{sarig2025}. \texttt{proxymate} was used to validate the ML model as a proxy for experimental decision making.

\begin{figure}[t]
\centering
\begin{subfigure}{0.48\textwidth}
\centering
\includegraphics[width=\textwidth]{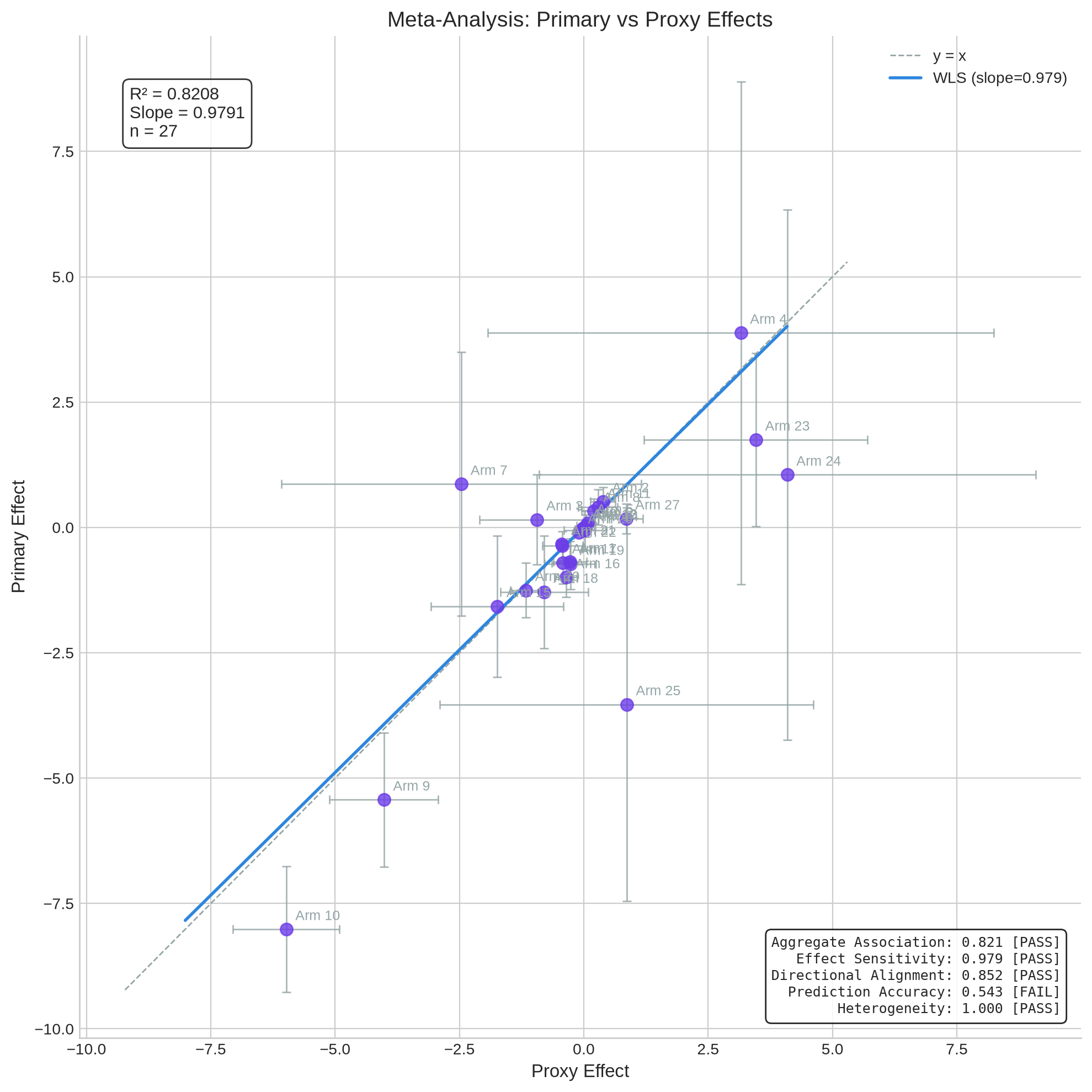}
\end{subfigure}
 \caption{Case Study A: Scatter plot capturing the \textit{Domain Level} validation across 22 experiments (27 arms) comparing the treatment effect measured through primary and proxy. }
\label{fig:casea}
\end{figure}

\paragraph{Results.} We applied \texttt{proxymate} across 22 historical experiments ($K = 27$ test versions) to validate the average treatment effect estimated through the ML model against the average treatment effect of the long maturity nonpayment outcome. Across all 27 comparisons, the proxy showed strong Unit Level agreement (directional concordance) with the primary while Unit Level precision checks ($R^2$, RMSE, Pearson $r$) failed.
This is expected for a 60-day horizon with heavy-tailed revenues, where unit-level noise is intrinsic. Evaluating at the \textit{Estimate} and \textit{Domain Levels} (§\ref{sec:framework}) then tests whether this noise compromises aggregate inference. The scatter plot between the treatment effects calculated through the primary and proxy (Figure~\ref{fig:casea}) shows strong aggregate association ($R^2 = 0.82$) and near-one effect sensitivity ($\gamma_1 = 0.98$). 
Further, $23$ of $27$ arms are directionally aligned ($85\%$). Residual heterogeneity is negligible ($I^2 \approx 0$, $\hat\tau^2 \approx 0$; Cochran's $Q = 22.6$ on $25$ degrees of freedom, with p-value $p = 0.60$). Deviations from the fit are indistinguishable from within-arm sampling noise. LOO prediction accuracy nonetheless fails (MAE $= 0.84$; median per-arm SE $= 0.41$), reflecting intrinsic noise in the $60$-day primary rather than instability in the proxy--primary relationship. Based on these results, the proxy was deemed to be safe for direction-based decision making i.e. is the treatment effect positive or not but not for measuring magnitude of the treatment effect.

\paragraph{Impact.}
The validated proxy replaced the 60-day wait with a tiered decision policy: use its direction for go/no-go, apply the estimate-level adjustment when a calibrated prediction interval is needed, and defer to the mature estimate for precise effect sizes. It now supports $\sim$300 experiments annually and has enabled multiple landed decisions.

\subsection{Case Study B: Scam Prevalence}
\label{sec:whatsapp-case-study}
\paragraph{Problem and setup.} To estimate scam prevalence on a messaging platform, expert review (primary) is available only for a small labeled sample drawn from a review queue, which over-represents flagged content and under-represents benign content, while an ML classifier (proxy) scores all content. The dataset has $>$1M target accounts; $\sim$42K have paired expert and proxy labels. We evaluate three proxy candidates: Proxy~A (scaled human review), Proxy~B and~C (two LLM classifiers). This use case exercises the \textit{Representativity}, \textit{Unit}, and \textit{Estimate Levels}.

\paragraph{Results.} The \textit{Representativity Level} flags a data-quality problem. Proxy~A passes sample representativity (SMD within bounds) but fails effective sample size: the IW weights are highly variable, indicating the labeled sample is concentrated in a small covariate region, so IW cannot fully recover population-level estimates (Appendix~\ref{app:weighting_fails}).
At the \textit{Unit Level}, Proxy~A agrees strongly with expert review on the unweighted sample. After IW reweighting to the target population, classification sensitivity (true positive rate) drops sharply: 17.3\% for Proxy~A, 31.6\% for B, 30.1\% for C. The proxies perform well on over-represented flagged accounts but poorly on the broader population. None of the proxies pass \textit{Unit Level} validation.
At the \textit{Estimate Level}, the naive expert-review prevalence is 31.9\%; the IW-corrected estimate is 3.0\% (roughly 11$\times$ lower, best-available under covariate shift, not ground truth). PPI applied to Proxy~A yields 10.7\%, down 3.3~pp from its naive proxy estimate of 13.9\%, still $\sim$3.5$\times$ the IW-corrected primary. Unlike Case A, the aggregate checks do not rescue the proxy from its \textit{Unit Level} failure, so the framework rejects all three candidates.

\paragraph{Impact.} Two corrections operate at different points in the pipeline and should not be conflated. IW is a \emph{sample-level} correction from the \textit{Representativity Level}: it reweights the biased labeled sample so any estimate computed on it (primary or proxy) targets the population rather than the labeled subset, moving the primary prevalence from 31.9\% labeled to 3.0\% population. PPI is a \emph{proxy-level} correction from the \textit{Estimate Level}: it debiases the proxy against the primary within the labeled sample (13.9\% to 10.7\% for Proxy~A) but assumes the labeled sample already represents the population. Applied without the \textit{Representativity Level}'s diagnostic, PPI would report Proxy~A at $\sim$10.7\% as the population estimate, still $\sim$3$\times$ the IW-corrected value. \texttt{proxymate}'s contribution is the ordering: representativity first flags the sample as unreliable via ESS failure; \textit{Unit Level} then rejects all three candidates after reweighting, motivating building a proxy using an ensemble approach rather than a single validated proxy.

\subsection{Learnings}
Applying \texttt{proxymate} across the applications at Meta (such as the ones in Table \ref{tab:proxymate-impact}) surfaced several recurring patterns about how proxy validation operates in practice and where it delivers value.

Validation requires paired proxy and primary signals jointly observed in the same population, but in practice the proxy is often not logged ahead of time, the primary matures only after a fresh cohort accrues, and the two frequently live in distinct data systems whose join keys must be reconstructed. Once assembled, the methodology itself runs as a single notebook execution. A second integration constraint shapes proxy selection: validated proxies must compose with the existing experimentation or reporting stack, and several applications accordingly chose a proxy that did not maximize statistical quality but fit cleanly into existing decision-rule infrastructure. This was an important production tradeoff: the best statistical proxy was not always the best deployable proxy.

In multiple applications 
the validation layer's primary contribution was to reject candidate proxies that would otherwise have entered downstream decision pipelines. Each rejection triggers a tractable next step, such as building a new proxy or changing the sampling design.

Mature applications run \texttt{proxymate} repeatedly across successive proxy metric versions 
or for ongoing monitoring. The data and sampling infrastructure assembled for the first pass should be treated as a persistent asset.

\texttt{proxymate}'s modular structure allows for easy application specific development. \textit{Estimate Level} checks can be reused to build custom sensitivity analyses (e.g., comparing a weighted estimate $\sum_{i \in N} w_i Y_i$ to a perturbed one $\sum_{i \in N^*} w_i^* Y_i$ where a subset of labels is dropped), even when the underlying weighting logic sits outside the package.

\section{Conclusion}
We presented \texttt{proxymate}, a four-level framework for proxy estimate validation that spans population validity (\textit{Representativity Level}), unit-level measurement quality (\textit{Unit Level}), aggregate-level decision validity (\textit{Estimate Level}), and cross-domain transportability (\textit{Domain Level}). Beyond assembling existing diagnostics into a pipeline, the paper contributes: (i) the four-level framework itself, applicable uniformly to A/B surrogates, prevalence classifiers, and monitoring proxies; (ii) a diagnostic-to-adjustment bridge that maps each failure mode to a principled adjustment, formalised through Theorem~\ref{thm:unnecessary}, and Bridges~\ref{app:bridge-ipw}--\ref{app:bridge-discrimination}; and (iii) empirical validation across six production applications at Meta, with two case studies (long-term outcome prediction and prevalence estimation) demonstrating that \texttt{proxymate} identifies proxy failures that would otherwise lead to incorrect decisions. Lastly (iv) a modular open-source package for practitioners.

\smallskip
\textit{Limitations.} \texttt{proxymate} validates one proxy against one primary outcome at a time and establishes statistical agreement rather than causal mechanisms. Future directions include sequential validation via confidence sequences, multi-outcome validation, and automated proxy selection using \texttt{proxymate} diagnostics as an objective.

\section{Acknowledgments}
We would like to thank Christine Agarwal, Aude Hofleitner, Thomas Leeper, Udi Weinsberg for their inputs in conceptualizing \texttt{Proxymate}. Further, we express our gratitude to Inbar Kaslasi for her help in building the \texttt{Proxymate} platform.

\bibliographystyle{ACM-Reference-Format}
\bibliography{references}

\appendix

\section{Complete Check Inventory}
\label{sec:app_complete_checks}

\noindent Tables~\ref{tab:repr_checks}--\ref{tab:domain_checks} list all 46 checks implemented in \texttt{proxymate}.

\begin{table}[t]
\caption{\textit{Representativity Level} checks (5 checks).}
\label{tab:repr_checks}
\small
\begin{tabular}{@{}p{2.8cm}p{3cm}p{2cm}@{}}
\toprule
\textbf{Check} & \textbf{Question} & \textbf{Statistic} \\
\midrule
Coverage & What fraction is covered? & $|\mathcal{V}_k|/|P_k|$ \\
Sample Representativity & Is $\mathcal{V}_k$ representative of $P_k$? & Max SMD, KS \\
Effective Sample Size & Are weights stable? & Kish's ESS \\
Weight Extremity & Any dominant weights? & Max/min, CV \\
Weighting Effectiveness & Does IW reduce gap? & Gap reduction \\
\midrule
\multicolumn{3}{@{}p{7.8cm}}{\footnotesize \emph{Abbreviations:} $\mathcal{V}_k$~=~validation (labeled) data in domain~$k$; $P_k$~=~population distribution in domain~$k$;
$|\mathcal{V}_k|$ and $|P_k|$ is number of units in the validation data and underlying population in domain~$k$ respectively; 
SMD~=~standardized mean difference between $\mathcal{V}_k$ and $P_k$ per covariate; Max SMD~=~maximum SMD across all covariates; KS~=~two-sample Kolmogorov--Smirnov statistic per covariate; ESS~=~Kish's effective sample size, $(\sum_i w_i)^2 / \sum_i w_i^2$; CV~=~coefficient of variation of the weights; IW~=~importance weighting; Gap reduction statistic is the reduction in Max SMD after reweighting.} \\
\bottomrule
\end{tabular}
\end{table}

\begin{table}[t]
\caption{\textit{Unit Level} checks (27 checks).}
\label{tab:unit_checks}
\small
\begin{tabular}{@{}p{2.5cm}p{3.2cm}p{2cm}@{}}
\toprule
\textbf{Check} & \textbf{Question} & \textbf{Statistic} \\
\midrule
\multicolumn{3}{@{}l}{\emph{Accuracy}} \\
T-Test              & Mean diff.\ = 0?         & $t$-statistic \\
MAPE                & Avg.\ \% error?          & $|Y^*-Y|/|Y|$ \\
\midrule
\multicolumn{3}{@{}l}{\emph{Precision}} \\
$R^2$               & Variance explained?      & $R^2$ \\
RMSE                & Typical error?           & $\sqrt{\text{MSE}}$ \\
Pearson $r$         & Linear association?      & $r$ \\
\midrule
\multicolumn{3}{@{}l}{\emph{Agreement}} \\
Lin's CCC           & Identity line?           & CCC \\
Bland-Altman        & Limits of agreement?     & bias $\pm$ LoA \\
Dir.\ Concordance   & Sign agreement?          & \% concordant \\
\midrule
\multicolumn{3}{@{}l}{\emph{Calibration}} \\
ECE                 & Well-calibrated?         & ECE \\
Brier Score         & MSE of probs?            & Brier \\
Brier Skill         & vs.\ base rate?          & BSS \\
Calib.\ Ratio       & Pred/obs mean?           & $\bar Y^*/\bar Y$ \\
Kuiper/ECCE         & Calibration deviation?   & ECCE \\
\midrule
\multicolumn{3}{@{}l}{\emph{Discrimination}} \\
ROC AUC             & Rank probability?        & AUC \\
PR AUC              & Prec-recall AUC?         & PR-AUC \\
Kendall $\tau$      & Rank correlation?        & $\tau$ \\
Spearman $\rho$     & Monotonic assoc.?        & $\rho$ \\
Classif.\ Accuracy  & Overall accuracy?        & $(TP{+}TN)/N$ \\
Classif.\ Precision & Precision?              & $TP/(TP{+}FP)$ \\
Sensitivity         & Recall / TPR?           & TPR \\
Specificity         & TNR?                    & TNR \\
NPV                 & Neg.\ pred.\ value?     & $TN/(TN{+}FN)$ \\
MCC                 & Matthews corr.?         & MCC \\
Confusion Matrix    & Full confusion?         & $2{\times}2$ table \\
\midrule
\multicolumn{3}{@{}l}{\emph{Distributional}} \\
KS Distance         & Max CDF diff?           & $\sup|F{-}G|$ \\
TV Distance         & Total variation?        & TV \\
Wasserstein-1       & Earth mover's?          & $W_1$ \\
\midrule
\multicolumn{3}{@{}p{7.9cm}}{\footnotesize \emph{Abbreviations:} MAPE~=~mean absolute percentage error; RMSE~=~root MSE; CCC~=~concordance correlation coefficient; LoA~=~limits of agreement; ECE~=~expected calibration error; BSS~=~Brier skill score; ECCE~=~expected calibration classification error; AUC~=~area under curve; PR-AUC~=~precision-recall AUC; TPR~=~true positive rate; TNR~=~true negative rate; NPV~=~negative predictive value; MCC~=~Matthews correlation coefficient; KS~=~Kolmogorov--Smirnov; TV~=~total variation; $W_1$~=~Wasserstein-1 distance.} \\

\bottomrule
\end{tabular}
\end{table}

\begin{table}[t]
\caption{\textit{Estimate Level} checks (8 checks).}
\label{tab:Estimate_checks}
\small
\begin{tabular}{@{}p{2.5cm}p{3.2cm}p{2cm}@{}}
\toprule
\textbf{Check} & \textbf{Question} & \textbf{Statistic} \\
\midrule
Aggregate Bias & Proxy close to primary? & Rel.\ abs.\ error \\
Inferential Val. & Proxy CI covers primary? & Bootstrap cov. \\
CI Coverage & Coverage across resamples? & Empirical cov. \\
CI Overlap & CIs overlap? & Overlap coeff. \\
Relative Precision & Proxy as precise? & SE ratio \\
Sign Agreement & Same direction? & Binary \\
Effect Size Agr. & Within each other's CI? & Bidir.\ coverage \\
Unbiasedness & Bootstrap mean centered? & Bias/SE \\
\midrule
\multicolumn{3}{@{}p{7.9cm}}{\footnotesize
\emph{Abbreviations.} Rel.\ abs.\ error $= |\hat\theta_k - \hat\theta^*_k|/|\hat\theta_k|$;
CI $=$ confidence interval; Bidir.\ coverage $=$ both CIs contain each other's
point estimate; SE $=$ standard error; Overlap coeff.\ $=$ Jaccard-style overlap
of the primary and proxy CIs.
\emph{Bootstrap.} $B$ nonparametric resamples of the labeled rows, given percentile (e.g., $95\%$) CI. \emph{Bootstrap cov.}\ $=$ indicator that the resampled $\hat\theta^*_k$ CI contains $\hat\theta_k$; \emph{Empirical cov.}\ $=$ fraction of $B$ resamples whose recomputed CI contains $\hat\theta_k$;
\emph{Bias/SE} $= |\bar\theta^*_{\text{boot}} - \hat\theta^*_k|/\widehat{\mathrm{SE}}(\hat\theta^*_k)$.} \\
\bottomrule
\end{tabular}
\end{table}

\begin{table}[t]
\caption{\textit{Domain Level} checks (6 checks).}
\label{tab:domain_checks}
\small
\begin{tabular}{@{}p{2.5cm}p{3.2cm}p{2cm}@{}}
\toprule
\textbf{Check} & \textbf{Question} & \textbf{Statistic} \\
\midrule
Aggregate Assoc. & Large proxy $\to$ large primary? & $R^2$ \\
Effect Sensitivity & One-to-one? & Slope $\gamma_1$ \\
Dir.\ Alignment & Same direction? & Concordance \\
Prediction Acc. & Predict out-of-sample? & LOO MSE/MAE \\
Heterogeneity & Stable across domains? & $I^2$, $\tau^2$, Q \\
Residual Autocorr. & Independent residuals? & Durbin-Watson \\
\midrule
\multicolumn{3}{@{}p{7.9cm}}{\footnotesize \emph{Abbreviations:} Aggregate Assoc.~=~aggregate association; Dir.\ Alignment~=~directional alignment; Prediction Acc.~=~prediction accuracy; LOO~=~leave-one-out cross-validation; MSE~=~mean squared error; MAE~=~mean absolute error; $R^2$~=~coefficient of determination (proportion of variance in the primary explained by the proxy across domains); $\gamma_1$~=~slope of the primary-on-proxy meta-regression; $I^2$~=~proportion of total variance attributable to between-domain heterogeneity; $\tau^2$~=~between-domain variance in the units of the effect; Q~=~Cochran's $Q$ statistic; Durbin-Watson~=~Durbin-Watson autocorrelation statistic.} \\
\bottomrule
\end{tabular}
\end{table}

\section{Assumptions and Guarantees for Supported Adjustments}
\label{app:bridge-proofs}

This appendix formalizes the ``diagnostic-to-adjustment bridge'' summarized in Table \ref{tab:bridge}.
Each bridge result has two parts: (i) an assumption that the diagnostic is intended to
check (or approximate), and (ii) a guarantee that the corresponding adjustment is valid
when the assumption holds.

Throughout this appendix, $P$ corresponds to the domain population $P_k$ of the main
body and $Q$ corresponds to the conditional distribution $P_k(\cdot \mid R_i = 1)$, i.e., the law of observations for units whose primary outcome $Y_i$ is observed. Expectations
under $P$ and $Q$ are denoted $\mathbb{E}_P[\cdot]$ and $\mathbb{E}_Q[\cdot]$, and $\mathbb{P}(\cdot), \mathbb{Q}(\cdot)$ denotes the probability of an event (Section~\ref{sec:pb-setup} convention).

We consider generic estimands of the form
\[
\theta := \mathbb{E}_{P}[g(Y)]
\qquad\text{and}\qquad
\theta^* := \mathbb{E}_{P}[g(Y^*)],
\]
where $g$ is a measurable function (e.g., identity for means, indicator for prevalence,
or any per-unit contribution to an aggregate estimand).
When required, we assume integrability so these expectations exist.

\subsection{Bridge 1: Representativity diagnostics $\rightarrow$ IW Reweighting}
\label{app:bridge-ipw}

\paragraph{Assumption (Selection on observables / covariate shift).}
The labeled sample $Q$ satisfies:
\begin{enumerate}
\item Conditional invariance:
\begin{equation}
\mathbb{P}(Y,Y^* \mid X) = \mathbb{Q}(Y,Y^* \mid X).
\label{eq:covshift}
\end{equation}
\item Positivity: if $p(x)>0$ then $q(x)>0$.
\end{enumerate}
Define the importance weight
\[
w(x) := \frac{p(x)}{q(x)}.
\]

\begin{proposition}[IW identity]
\label{prop:ipw-identity}
Under \eqref{eq:covshift} and positivity, for any integrable function $g(O)$,
\[
\mathbb{E}_{P}[g(O)] = \mathbb{E}_{Q}\big[w(X)\, g(O)\big].
\]
\end{proposition}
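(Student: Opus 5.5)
The plan is to prove the identity by conditioning on $X$, i.e.\ by disintegrating both $P$ and $Q$ into a marginal law of $X$ times a conditional law of $(Y,Y^*)$ given $X$. Write $O=(Y,Y^*,X)$ and factor the joint laws as $P(dy,dy^*,dx)=\mathbb{P}(dy,dy^*\mid x)\,p(x)\,dx$ and similarly $Q(dy,dy^*,dx)=\mathbb{Q}(dy,dy^*\mid x)\,q(x)\,dx$, with densities taken with respect to a common dominating measure on the covariate space. If $X$ is discrete, replace the integrals by sums.

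\textbf{Step 1 (reduce to covariates).} Define $m_P(x):=\mathbb{E}_P[g(O)\mid X=x]$ and $m_Q(x):=\mathbb{E}_Q[g(O)\mid X=x]$. By the conditional invariance assumption \eqref{eq:covshift}, the conditional laws of $(Y,Y^*)$ given $X=x$ coincide under $P$ and $Q$. Since $g(O)$ depends on $O$ only through $(Y,Y^*)$ and the conditioning value $x$, it follows that $m_P(x)=m_Q(x)=:m(x)$. This holds for every $x$ in the support of $q$, which is the only place where $m_Q$ is defined.

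\textbf{Step 2 (tower property and change of measure).} By the tower property, $\mathbb{E}_P[g(O)]=\int m(x)\,p(x)\,dx$. Positivity lets me write $p(x)=w(x)q(x)$ on $\{p>0\}$, and on $\{p=0\}$ the integrand vanishes. Hence
\[
\mathbb{E}_P[g(O)]=\int m(x)\,w(x)\,q(x)\,dx=\mathbb{E}_Q\big[w(X)\,m(X)\big].
\]
Since $w(X)$ is $X$-measurable, the tower property again gives $\mathbb{E}_Q[w(X)\,m(X)]=\mathbb{E}_Q\big[\mathbb{E}_Q[w(X)g(O)\mid X]\big]=\mathbb{E}_Q[w(X)g(O)]$.

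\textbf{Main obstacle.} The only delicate point is measure-theoretic bookkeeping. First, the conditional expectations are defined only up to null sets. Positivity is what guarantees that a $Q$-null set of $x$ values is also $P$-null, i.e.\ $P_X\ll Q_X$, so $m_Q$ determines $m_P$ $P_X$-almost everywhere. Second, integrability of $w(X)g(O)$ under $Q$ has to be justified. For this I will apply the whole argument to $|g|$ first: for nonnegative functions, Tonelli lets every manipulation go through without any integrability assumption, which yields $\mathbb{E}_Q[w|g|]=\mathbb{E}_P[|g|]<\infty$. Then splitting $g=g^+-g^-$ gives the identity for general integrable $g$.
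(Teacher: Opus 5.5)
Your proposal is correct and follows essentially the same route as the paper: condition on $X$, use conditional invariance to identify $\mathbb{E}_Q[g(O)\mid X=x]$ with $\mathbb{E}_P[g(O)\mid X=x]$, and absorb $w(x)q(x)=p(x)$. The paper simply runs the same chain from the $Q$ side to the $P$ side. Your extra bookkeeping fills in details that the paper's three-line computation takes for granted: positivity giving $P_X\ll Q_X$, and Tonelli on $|g|$ followed by the split $g=g^+-g^-$ to justify $Q$-integrability of $w(X)g(O)$.
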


\begin{proof}
By the law of iterated expectation and \eqref{eq:covshift},
\begin{align*}
\mathbb{E}_Q[w(X)g(O)]
&= \int w(x)\, \mathbb{E}_Q[g(O)\mid X=x]\, q(x)\, dx \\
&= \int \mathbb{E}_P[g(O)\mid X=x]\, p(x)\, dx \\
&= \mathbb{E}_P[g(O)].
\end{align*}
\end{proof}

\paragraph{Interpretation (why SMD/ESS are part of the bridge).}
Representativity diagnostics such as standardized mean differences (SMD) and
two-sample tests on $X$ are aimed at assessing whether the covariate density $q(x)$ plausibly matches $p(x)$,
and whether reweighting to align $q(x)$ to $p(x)$ is feasible. Weight stability diagnostics
(e.g., Kish effective sample size) assess whether the variance inflation induced by $w(X)$
is acceptable; when the effective sample size is small, IW estimators have high variance
and are sensitive to model misspecification and finite-sample noise, motivating caution
or fallback strategies (targeted labeling, narrowing target, sensitivity analysis).

\subsection{Bridge 2: Systematic Unit Bias Diagnostics $\rightarrow$ Additive Bias Correction}
\label{app:bridge-additive-bias}

\paragraph{Assumption (additive proxy bias).}
Suppose the proxy admits a constant additive bias
\begin{equation}
\mathbb{E}_P[Y^* - Y] = b,
\label{eq:additive-bias}
\end{equation}
for some finite $b$ (possibly domain-specific). The stronger unit-level decomposition $Y^* = Y + b + \varepsilon$ with $\mathbb{E}[\varepsilon]=0$ is sufficient but not necessary for the mean-estimand correction below.

\begin{proposition}[Additive bias correction for mean estimands]
\label{prop:additive-bias}
Assume \eqref{eq:additive-bias}. Let $b = \mathbb{E}_P[Y^*-Y]$ and define the corrected proxy
$\tilde Y^* := Y^* - b$. Then
\[
\mathbb{E}_P[\tilde Y^*] = \mathbb{E}_P[Y].
\]
If $b$ is estimated from representative paired data (or IW-reweighted paired data),
then the plug-in correction $\tilde Y^* := Y^* - \hat b$ yields an asymptotically
unbiased estimator of $\mathbb{E}_P[Y]$.
\end{proposition}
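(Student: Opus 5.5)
The first claim is a direct consequence of linearity of expectation. Since $b = \mathbb{E}_P[Y^*-Y]$ is a finite constant and both $Y$ and $Y^*$ are integrable (the standing integrability assumption), we have
\[
\mathbb{E}_P[\tilde Y^*] = \mathbb{E}_P[Y^*] - b = \mathbb{E}_P[Y^*] - \mathbb{E}_P[Y^*] + \mathbb{E}_P[Y] = \mathbb{E}_P[Y].
\]
No unit-level structure is needed. In particular, we do not need the decomposition $Y^* = Y + b + \varepsilon$, which is why the statement notes that it is sufficient but not necessary.

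For the plug-in part, I would make the estimator explicit. Let $\hat b$ be computed from $n$ paired observations. In the representative case, $\hat b = n^{-1}\sum_{i\in\mathcal{V}}(Y^*_i - Y_i)$. In the reweighted case, I would use the IW estimator $\hat b = n^{-1}\sum_{i\in\mathcal{V}} w(X_i)(Y^*_i - Y_i)$, or its self-normalized (Hájek) version. The estimator of $\mathbb{E}_P[Y]$ is then $\hat\theta = \hat\theta^* - \hat b$, where $\hat\theta^*$ is the proxy mean over the domain population. Its expectation is $\mathbb{E}_P[Y^*] - \mathbb{E}[\hat b]$, so it suffices to show $\mathbb{E}[\hat b] \to b$.

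In the representative case, $\mathbb{E}[\hat b] = b$ exactly. In the IW case, apply Proposition~\ref{prop:ipw-identity} with $g(O) = Y^* - Y$ to obtain $\mathbb{E}_Q[w(X)(Y^*-Y)] = \mathbb{E}_P[Y^*-Y] = b$. This gives exact unbiasedness when the weights are known. It also gives consistency $\hat b \to b$ by the law of large numbers, provided $w(X)(Y^*-Y)$ is $Q$-integrable.

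The main obstacle is the self-normalized or estimated-weight case. There, $\hat b$ is a ratio or depends on fitted propensities, so it is only consistent rather than unbiased. Turning consistency into asymptotic unbiasedness requires uniform integrability. I would handle this in two steps. First, apply the delta method or Slutsky's theorem to the ratio $\sum_i w_i(Y^*_i-Y_i)/\sum_i w_i$; this shows the bias is $O(1/n)$ under finite second moments of $w(X)$ and $w(X)(Y^*-Y)$. Second, for modeled weights, assume the propensity model is correctly specified and consistent, with weights bounded away from infinity. Positivity plus a bounded-weights condition, which the ESS diagnostic is meant to check, gives dominated convergence. I would state these moment and boundedness conditions explicitly as the interpretation of ``asymptotically unbiased.''
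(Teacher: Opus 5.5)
Your proof is correct and follows the same route as the paper. You use linearity of expectation for the population identity, and for the plug-in claim you use the sample-mean argument under representative sampling or Proposition~\ref{prop:ipw-identity} applied to $g(O)=Y^*-Y$ under reweighting. Your version is more careful than the paper's one-line appeal to the law of large numbers: you separate exact unbiasedness (representative data or known weights) from consistency (self-normalized or modeled weights), and you note that the latter case needs a uniform-integrability condition to give asymptotic unbiasedness.
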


\begin{proof}
By linearity of expectation,
\[
\mathbb{E}_P[\tilde Y^*] = \mathbb{E}_P[Y^* - b] = \mathbb{E}_P[Y^*] - \mathbb{E}_P[Y^*-Y] = \mathbb{E}_P[Y].
\]
Consistency/unbiasedness of $\hat b$ follows from standard laws of large numbers under
representative sampling (or Proposition~\ref{prop:ipw-identity} under IW).
\end{proof}

\paragraph{Bridge logic.}
\textit{Unit}-level bias diagnostics (e.g., $t$-tests of $\mathbb{E}[Y^*-Y]=0$) are designed to detect
violations of $b\approx 0$. When violated, Proposition~\ref{prop:additive-bias} shows
the additive correction targets exactly this failure mode.

\subsection{Bridge 3: Miscalibration Diagnostics $\rightarrow$ Calibration Adjustments}
\label{app:bridge-calibration}

Let $Y\in\{0,1\}$ and let $S$ be a proxy score (often $S=Y^*$) intended to approximate
$\Prb(Y=1\mid\cdot)$. Define the \emph{calibration function}
\[
m(s) := \Prb(Y=1\mid S=s).
\]

\begin{proposition}[Perfect calibration via the true calibration map]
\label{prop:true-calibration-map}
The transformed score $\tilde S := m(S)$ is calibrated:
\[
\mathbb{E}[Y \mid \tilde S] = \tilde S \quad \text{a.s.}
\]
\end{proposition}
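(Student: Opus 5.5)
The plan is to reduce the statement to the tower property of conditional expectation, using that $\tilde S = m(S)$ is a measurable function of $S$. Since $Y\in\{0,1\}$, we have $m(S) = \Prb(Y=1\mid S) = \mathbb{E}[Y\mid S]$ almost surely. Because $\tilde S$ is $\sigma(S)$-measurable, $\sigma(\tilde S)\subseteq\sigma(S)$. The tower property then gives
\[
\mathbb{E}[Y\mid \tilde S] = \mathbb{E}\big[\mathbb{E}[Y\mid S]\,\big|\,\tilde S\big] = \mathbb{E}[m(S)\mid \tilde S] = \mathbb{E}[\tilde S\mid \tilde S] = \tilde S \quad\text{a.s.},
\]
where the last equality holds because $\tilde S$ is trivially $\sigma(\tilde S)$-measurable and integrable (it takes values in $[0,1]$).

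The key steps, in order, are as follows. First, fix a version of the regular conditional probability so that $m$ is a genuine Borel measurable function $m:\mathbb{R}\to[0,1]$; its existence follows from Doob--Dynkin applied to $\mathbb{E}[Y\mid S]$. Second, note integrability: $Y$ and $\tilde S$ are bounded, so all conditional expectations exist. Third, apply the tower property as displayed. Alternatively, and more concretely, verify the defining property directly. For any Borel set $A\subseteq[0,1]$, the event $\{\tilde S\in A\} = \{S\in m^{-1}(A)\}$ lies in $\sigma(S)$, hence
\[
\mathbb{E}[Y\,\mathbf{1}\{\tilde S\in A\}] = \mathbb{E}[m(S)\,\mathbf{1}\{\tilde S\in A\}] = \mathbb{E}[\tilde S\,\mathbf{1}\{\tilde S\in A\}].
\]
Since $\tilde S$ is $\sigma(\tilde S)$-measurable, this identifies $\tilde S$ as a version of $\mathbb{E}[Y\mid\tilde S]$.

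The only real subtlety is the measurability and version issue: $m$ is defined only up to $P_S$-null sets, so we must check that changing the version of $m$ alters $\tilde S$ only on a null event. If $m=m'$ $P_S$-a.s., then $m(S)=m'(S)$ a.s., and the conclusion, which is itself an almost-sure statement, is unaffected. I would state this explicitly and otherwise keep the argument to the two-line tower computation. I would also remark that this is the population justification for Platt and isotonic recalibration: these procedures estimate $m$ within a parametric or monotone class, so their guarantee is approximate, and exact only when $m$ lies in that class.
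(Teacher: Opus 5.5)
Your proof is correct and follows the same route as the paper, which uses exactly the tower-property chain $\mathbb{E}[Y\mid m(S)] = \mathbb{E}[\mathbb{E}[Y\mid S]\mid m(S)] = \mathbb{E}[m(S)\mid m(S)] = m(S)$. Your additional points (the Doob--Dynkin choice of a Borel version of $m$, the direct check of the defining property, and the version-independence remark) are sound, but the paper does not spell them out.
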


\begin{proof}
Using iterated expectations,
\[
\mathbb{E}[Y\mid m(S)] = \mathbb{E}\big[\mathbb{E}[Y\mid S]\mid m(S)\big]
= \mathbb{E}[m(S)\mid m(S)] = m(S).
\]
\end{proof}

\paragraph{Model-based calibration (Platt scaling).}
If $m(s)=\sigma(as+b)$ for some $a,b$ and logistic sigmoid $\sigma(\cdot)$, then fitting
$(\hat a,\hat b)$ by maximum likelihood yields $\hat m(s)=\sigma(\hat a s+\hat b)$ which
is consistent for $m(s)$ under standard regularity conditions; combined with
Proposition~\ref{prop:true-calibration-map}, this implies asymptotically vanishing
calibration error.

\paragraph{Nonparametric calibration (isotonic regression).}
If $m(s)$ is monotone nondecreasing, isotonic regression estimates $\hat m$ that is
$L_2$-consistent for $m$ under mild conditions, implying calibration error converges to
zero as sample size grows.

\paragraph{Bridge logic.}
Calibration diagnostics (ECE, Brier score) detect $m(s)\neq s$ in aggregate. The
corresponding adjustments (Platt/isotonic/multicalibration) aim to estimate $m$ within
an appropriate function class; Proposition~\ref{prop:true-calibration-map} provides the
target guarantee: replacing $S$ by $\hat m(S)$ yields calibrated probabilities in the
limit when $\hat m\to m$.

\subsection{Bridge 4: \textit{Estimate Level} Bias/Coverage Diagnostics $\rightarrow$ PPI}
\label{app:bridge-ppi}

Assume the proxy is available for all $N$ target units, and we have a labeled set
$L$ of size $n$ with paired $(Y_i,Y_i^*)$ drawn representatively from the target
population (or made representative via IW).
We write $g$ for the unit level function throughout. Define
\[
\theta := \mathbb{E}_P[g(Y)],\quad
\theta^* := \mathbb{E}_P[g(Y^*)],\quad
\Delta := g(Y)-g(Y^*).
\]

The prediction-powered inference (PPI) estimator is
\begin{equation}
\hat\theta_{\mathrm{PPI}}
:=
\frac{1}{N}\sum_{j=1}^N g(Y_j^*)
+
\frac{1}{n}\sum_{i\in L}\big(g(Y_i)-g(Y_i^*)\big)
=
\hat\theta^*_{\text{all}} + \hat\Delta_L.
\label{eq:ppi}
\end{equation}

\begin{proposition}[Unbiasedness of PPI]
\label{prop:ppi-unbiased}
If $L$ is representative of $P$ for the joint law of $(Y,Y^*)$ (or IW-corrected to be),
then $\mathbb{E}[\hat\theta_{\mathrm{PPI}}]=\theta$.
\end{proposition}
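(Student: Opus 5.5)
The plan is to use linearity of expectation on the decomposition $\hat\theta_{\mathrm{PPI}} = \hat\theta^*_{\text{all}} + \hat\Delta_L$ in \eqref{eq:ppi}, and to show that each term is unbiased for its population counterpart. The two targets are
\[
\mathbb{E}[\hat\theta^*_{\text{all}}] = \theta^*, \qquad \mathbb{E}[\hat\Delta_L] = \mathbb{E}_P[\Delta] = \theta - \theta^*.
\]
Adding them gives $\theta^* + (\theta - \theta^*) = \theta$. No independence between the full proxy sample and the labeled set $L$ is needed, since only first moments enter. Overlap of $L$ with the $N$ target units is therefore harmless.

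\textbf{First term.} The $N$ target units are draws from $P$, as in the main-text setup where units in a domain are distributed according to $P_k$. Hence each $g(Y_j^*)$ has mean $\mathbb{E}_P[g(Y^*)] = \theta^*$, assuming integrability. Averaging gives $\mathbb{E}[\hat\theta^*_{\text{all}}] = \theta^*$.

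\textbf{Second term, representative case.} If $L$ is representative of $P$ for the joint law of $(Y,Y^*)$, each labeled pair has marginal law equal to that of $(Y,Y^*)$ under $P$. Then $\mathbb{E}[g(Y_i) - g(Y_i^*)] = \mathbb{E}_P[g(Y)] - \mathbb{E}_P[g(Y^*)] = \theta - \theta^*$. Averaging over the $n$ labeled units gives $\mathbb{E}[\hat\Delta_L] = \theta - \theta^*$.

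\textbf{Second term, IW-corrected case.} Here the labeled pairs are drawn from $Q$, and the correction term is read as the weighted average $\frac{1}{n}\sum_{i\in L} w(X_i)\,\Delta_i$. I would invoke Proposition~\ref{prop:ipw-identity} with $g(O) := g(Y) - g(Y^*)$, which is a function of $O$ and integrable. Under \eqref{eq:covshift} and positivity this gives $\mathbb{E}_Q[w(X)\Delta] = \mathbb{E}_P[\Delta] = \theta - \theta^*$, so the weighted correction is again unbiased.

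The main obstacle is interpretive rather than technical: fixing exactly what the expectation is taken over, and what ``IW-corrected'' means. The proposition is exact only for known weights $w$ normalized as $\mathbb{E}_Q[w]=1$. With estimated weights or self-normalized (Hájek-style) weights, the ratio introduces an $O(1/n)$ bias, and the claim holds only asymptotically, as in Proposition~\ref{prop:additive-bias}. I would state the result for the known-weight, unnormalized estimator and add a remark that the estimated-weight version is asymptotically unbiased by consistency of $\hat w$.
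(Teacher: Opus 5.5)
Your proposal is correct and takes the paper's approach. The paper's proof is this same linearity-of-expectation split, $\mathbb{E}[\hat\theta_{\mathrm{PPI}}] = \mathbb{E}[g(Y^*)] + \mathbb{E}[g(Y)-g(Y^*)] = \theta$, and it leaves implicit two points you make explicit: the IW-corrected case handled through Proposition~\ref{prop:ipw-identity} with known, unnormalized weights $w=p/q$, and the caveat that estimated or self-normalized weights give only asymptotic unbiasedness.
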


\begin{proof}
Taking expectations in \eqref{eq:ppi},
\[
\mathbb{E}[\hat\theta_{\mathrm{PPI}}]
=
\mathbb{E}[g(Y^*)] + \mathbb{E}[g(Y)-g(Y^*)]
=
\mathbb{E}[g(Y)]
=
\theta.
\]
\end{proof}

\begin{proposition}[Asymptotic variance and Wald intervals]
\label{prop:ppi-var}
If $\hat\theta^*_{\text{all}}$ and $\hat\Delta_L$ are computed on independent samples
(e.g., via sample splitting) then
\[
\operatorname{Var}(\hat\theta_{\mathrm{PPI}})=\frac{\sigma_*^2}{N}+\frac{\sigma_\Delta^2}{n},
\]
where $\sigma_*^2=\operatorname{Var}(g(Y^*))$ and $\sigma_\Delta^2=\operatorname{Var}(\Delta)$. Plug-in estimates
yield asymptotically valid Wald confidence intervals by the CLT.
\end{proposition}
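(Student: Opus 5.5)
The variance formula will follow directly from the decomposition \eqref{eq:ppi}, $\hat\theta_{\mathrm{PPI}}=\hat\theta^*_{\text{all}}+\hat\Delta_L$, by working on each summand separately and then combining them through independence.

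\textbf{Step 1: variance of each summand.} The $N$ target units are i.i.d.\ draws from $P$, so $\hat\theta^*_{\text{all}}$ is a sample mean of $g(Y_j^*)$ and
\[
\operatorname{Var}(\hat\theta^*_{\text{all}})=\sigma_*^2/N .
\]
The labeled set $L$ is representative, as in Proposition~\ref{prop:ppi-unbiased}, and consists of i.i.d.\ pairs. Hence $\hat\Delta_L$ is a sample mean of $\Delta_i=g(Y_i)-g(Y_i^*)$, and
\[
\operatorname{Var}(\hat\Delta_L)=\sigma_\Delta^2/n .
\]
Both statements need the finite second moments $\mathbb{E}_P[g(Y)^2]<\infty$ and $\mathbb{E}_P[g(Y^*)^2]<\infty$, which I will add to the integrability assumption already stated.

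\textbf{Step 2: combine.} Sample splitting makes the two summands independent, so the cross-covariance vanishes and the variances add:
\[
\operatorname{Var}(\hat\theta_{\mathrm{PPI}})=\frac{\sigma_*^2}{N}+\frac{\sigma_\Delta^2}{n}.
\]

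\textbf{Step 3: asymptotic normality.} The classical CLT applies to each sample mean. Independence then gives joint convergence of the pair, and the sum is asymptotically normal. I will standardize by the combined standard deviation $s_{N,n}=\sqrt{\sigma_*^2/N+\sigma_\Delta^2/n}$ rather than by $\sqrt{n}$ or $\sqrt{N}$ alone, so that $n,N\to\infty$ is allowed at any relative rate (for example $n/N\to c\in[0,\infty]$). This works because a convex combination of two independent asymptotically standard normal variables, with weights $\sqrt{(\sigma_*^2/N)}/s_{N,n}$ and $\sqrt{(\sigma_\Delta^2/n)}/s_{N,n}$ whose squares sum to one, is itself asymptotically $N(0,1)$. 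One way to check this is with characteristic functions, passing to subsequences along which the weights converge.

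\textbf{Step 4: plug-in and coverage.} The sample variances $\hat\sigma_*^2$ and $\hat\sigma_\Delta^2$ are consistent by the LLN. Their ratio to the true values therefore tends to one in probability, and Slutsky's lemma gives
\[
(\hat\theta_{\mathrm{PPI}}-\theta)/\hat s_{N,n}\Rightarrow N(0,1),
\]
with centering justified by Proposition~\ref{prop:ppi-unbiased}. This yields asymptotic $1-\alpha$ coverage of $\hat\theta_{\mathrm{PPI}}\pm z_{1-\alpha/2}\,\hat s_{N,n}$.

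\textbf{Main obstacle.} I expect the main obstacle to be Step 3 under unbalanced asymptotics ($n\ll N$). This is exactly where the standardization by $s_{N,n}$ and the subsequence argument are needed.

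\textbf{Caveat.} Under importance weighting, $\hat\Delta_L$ becomes a weighted mean. In that case $\sigma_\Delta^2$ should be read as $\operatorname{Var}_Q(w(X)\Delta)$, with known weights. If the weights are estimated, an additional influence-function term is required, and I would state that case separately rather than claim it.
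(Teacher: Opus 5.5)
Your proof is correct and follows the same route as the paper. The paper notes that independence makes $\operatorname{Var}(\hat\theta^*_{\text{all}})$ and $\operatorname{Var}(\hat\Delta_L)$ add, then cites the CLT and Slutsky's theorem for Wald validity; you fill in the details it leaves implicit: finite second moments, standardization by the combined $s_{N,n}$ so that any relative rate of $n$ and $N$ is covered, the fact that $\hat s_{N,n}/s_{N,n}\to 1$ in probability, and the importance-weighting caveat that the relevant variance becomes $\operatorname{Var}_Q(w(X)\Delta)$.
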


\begin{proof}
By independence,
$\operatorname{Var}(\hat\theta_{\mathrm{PPI}})=\operatorname{Var}(\hat\theta^*_{\text{all}})+\operatorname{Var}(\hat\Delta_L)
=\sigma_*^2/N+\sigma_\Delta^2/n$. The CLT and Slutsky's theorem yield Wald validity.
\end{proof}

\paragraph{Bridge logic.}
\textit{Estimate}-level diagnostics flag (i) aggregate bias $\theta^*-\theta\neq 0$ and/or
(ii) CI miscalibration (e.g., poor coverage when treating proxy-based intervals as
intervals for $\theta$). Proposition~\ref{prop:ppi-unbiased} shows PPI corrects the
bias term, and Proposition~\ref{prop:ppi-var} shows the corrected interval must widen
by the additional $\sigma_\Delta^2/n$ uncertainty term.

\subsection{Bridge 5: ``No Bias'' Diagnostics $\rightarrow$ Skip Adjustment (Gating Rule)}
\label{app:bridge-skip}



Let $\hat{\theta}^*_K$ be a proxy estimator with bias
$\phi = \theta^*_K - \theta_K$ and variance $\sigma_*^2 / N$, and let
$\hat{\theta}^{\mathrm{PPI}}_K$ be the PPI-corrected estimator using a labeled
sample of size $n$ under the assumptions of 
Proposition~\ref{prop:ppi-unbiased} and Proposition~\ref{prop:ppi-var}, with adjustment
variance $\sigma_\Delta^2 / n$ where $\sigma_\Delta^2 = \mathrm{Var}(g(Y) - g(Y^*))$.

\begin{theorem}[Unnecessary adjustment inflates MSE]
\label{thm:unnecessary}
\[
\mathrm{MSE}(\hat{\theta}^{\mathrm{PPI}}_K) \;<\; \mathrm{MSE}(\hat{\theta}^*_K)
\;\iff\; |\phi| \;>\; \sigma_\Delta / \sqrt{n}.
\]
In particular, when the proxy is unbiased ($\phi = 0$), PPI strictly
increases MSE by $\sigma_\Delta^2 / n$.
\end{theorem}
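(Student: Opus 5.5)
The plan is a bias--variance decomposition of both estimators followed by a direct comparison. By assumption, the proxy estimator $\hat\theta^*_K$ has bias $\phi$ and variance $\sigma_*^2/N$. Its MSE is therefore
\[
\mathrm{MSE}(\hat\theta^*_K)=\phi^2+\sigma_*^2/N .
\]
For the PPI estimator, Proposition~\ref{prop:ppi-unbiased} gives zero bias. Proposition~\ref{prop:ppi-var} gives variance $\sigma_*^2/N+\sigma_\Delta^2/n$ under the sample-splitting independence assumption. Hence
\[
\mathrm{MSE}(\hat\theta^{\mathrm{PPI}}_K)=\sigma_*^2/N+\sigma_\Delta^2/n .
\]

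Subtracting the two displays, the common term $\sigma_*^2/N$ cancels. This works because the PPI estimator is built on the same full-population proxy term $\hat\theta^*_{\text{all}}$, whose variance is exactly the proxy estimator's variance. We obtain
\[
\mathrm{MSE}(\hat\theta^*_K)-\mathrm{MSE}(\hat\theta^{\mathrm{PPI}}_K)=\phi^2-\sigma_\Delta^2/n .
\]
The strict inequality $\mathrm{MSE}(\hat\theta^{\mathrm{PPI}}_K)<\mathrm{MSE}(\hat\theta^*_K)$ is thus equivalent to $\phi^2>\sigma_\Delta^2/n$. Taking nonnegative square roots, this is equivalent to $|\phi|>\sigma_\Delta/\sqrt n$. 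For the special case $\phi=0$, the same difference equals $-\sigma_\Delta^2/n$. So PPI increases MSE by exactly $\sigma_\Delta^2/n$, and the increase is strictly positive provided $\sigma_\Delta^2>0$.

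The main obstacle is bookkeeping rather than analysis. First, we must justify identifying $\hat\theta^*_K$ with $\hat\theta^*_{\text{all}}$, so that the $\sigma_*^2/N$ terms genuinely coincide and cancel. Second, the word ``strictly'' requires the nondegeneracy condition $\sigma_\Delta^2>0$: if $g(Y)=g(Y^*)$ almost surely, the two estimators coincide and the inequality is only an equality. I would state $\sigma_\Delta>0$ as an implicit assumption. I would also remark that without sample splitting, a covariance term between $\hat\theta^*_{\text{all}}$ and $\hat\Delta_L$ would enter the PPI variance. That is why the theorem is stated under the hypotheses of Proposition~\ref{prop:ppi-var}.
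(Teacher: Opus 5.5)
Your proposal is correct and follows the paper's proof: the same bias--variance decomposition, the same cancellation of $\sigma_*^2/N$ under the sample-splitting assumption, and the same comparison of $\phi^2$ with $\sigma_\Delta^2/n$. Your explicit nondegeneracy condition $\sigma_\Delta^2>0$, needed for the word ``strictly'', is a sensible refinement that the paper leaves implicit.
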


\begin{proof}
$\mathrm{MSE}(\hat{\theta}^*_K) = \phi^2 + \sigma_*^2 / N$. Under the
sample-splitting assumption of Bridge~\ref{app:bridge-ppi} (the two sums in
$\hat{\theta}^{\mathrm{PPI}}$ are computed on independent samples, or
$n \ll N$ so the covariance term is negligible), $\mathrm{MSE}(\hat{\theta}^{\mathrm{PPI}}_K) = \sigma_*^2 / N + \sigma_\Delta^2 / n$
since PPI is unbiased. The difference is $\phi^2 - \sigma_\Delta^2 / n$.
\end{proof}

The break-even condition $|\phi| > \sigma_\Delta/\sqrt{n}$
has a practical reading: for small labeled samples (small $n$), the threshold
is high, so the adjustment is only worthwhile when the bias is substantial.

\paragraph{Bridge logic.} \emph{Estimate}-level diagnostics (aggregate bias,
bootstrap coverage) estimate $\phi$ and compare it to the
$\sigma_\Delta/\sqrt{n}$ threshold. When the diagnostics imply
$\phi \approx 0$ (e.g., near-nominal coverage for proxy intervals interpreted
as intervals for $\theta$, or small measured aggregate bias), PPI is skipped:
applying it would inject variance without reducing bias.

\subsection{Bridge 6: \textit{Domain Level} Diagnostics $\rightarrow$ Regression Calibration and Heterogeneity Inflation}
\label{app:bridge-domain}

Assume we observe $(\hat\theta_{k},\hat\theta_k^*)$ in historical domains $k=1,\dots,K-1$,
with (approximate) sampling variances $\sigma^2_{k}$ and $(\sigma_k^*)^2$ (matching the notation of \cite{angelopoulos2023}).
Assume a random-effects calibration model:
\begin{equation}
\theta_k = \gamma_0 + \gamma_1 \theta_k^* + \varepsilon_k,
\qquad
\varepsilon_k \stackrel{iid}{\sim} \mathcal{N}(0,\tau^2).
\label{eq:re-model}
\end{equation}
In the target domain $K$, only $\hat\theta_K^*$ is observed.

The predictive interval $I_{K,1-\alpha}^{\mathrm{cal}}$ for $\theta_K$ is stated as in
. The derivation
follows the random-effects model \eqref{eq:re-model} above: conditional on $\theta_K^*$,
$\theta_K$ has additive random effect $\varepsilon_K$ with variance $\tau^2$; the observed
proxy estimate $\hat\theta_K^*$ contributes sampling variance $(\sigma_K^*)^2$, which
scaled by $\gamma_1$ becomes $\gamma_1^2(\sigma_K^*)^2$. The estimator of $(\gamma_0, \gamma_1)$ is the inverse-variance-weighted meta-regression of \cite{dersimonian1986}: weights $1/\sigma^2_{k}$ (or $1/(\sigma^2_{k} + \hat\tau^2)$ for the random-effects variant) on the outcome side, with the regressor treated as observed with known variance $(\sigma_k^*)^2$. This estimator is consistent for $(\gamma_0,\gamma_1)$ under the joint asymptotic regime $K \to \infty$ with $\max_k (\sigma_k^*)^2 \to 0$ (attenuation from measurement error in the regressor vanishes in the limit); a plain OLS fit to $(\hat\theta_{k}, \hat\theta_k^*)$ is \emph{not} consistent and should not be used with this coverage claim. Under this regime, plug-in $(\hat\gamma_0, \hat\gamma_1, \hat\tau^2)$ with Slutsky's theorem yield the asymptotic coverage stated in \cite{wilkinsreeves2026}. \emph{Finite-$K$ caveat:} the plug-in variance omits the $O_p(1/K)$ uncertainty in $(\hat\gamma_0,\hat\gamma_1,\hat\tau^2)$, so the interval is narrower than nominal at small $K$; a cluster/domain bootstrap over the $K-1$ historical domains (as in Wilkins-Reeves et al.~\cite{wilkinsreeves2026}) is recommended when $K$ is small.

\paragraph{Bridge logic.}
\textit{Domain}-level diagnostics play distinct roles:
\begin{itemize}
\item Association/sensitivity diagnostics (e.g., $R^2$, slope $\gamma_1$ near 1, intercept near 0)
target whether a linear calibration map is plausible.
\item Heterogeneity diagnostics ($I^2$, $\tau^2$, $Q$) target whether $\tau^2$ is non-negligible.
When $\tau^2>0$, intervals that ignore $\tau^2$ are overconfident.
\item Prediction accuracy (e.g., LOO error) targets model adequacy; large LOO error suggests
misspecification, in which case one should warn and/or revert to conservative bounds rather than
apply regression calibration blindly.
\end{itemize}

\paragraph{Estimation of Between-Domain Variance}
The between-domain variance $\tau^2$ can be estimated by method of moments or by maximum likelihood. \texttt{proxymate} uses the DerSimonian-Laird moment estimator~\cite{dersimonian1986}, which sets Cochran's $Q$ equal to its expectation under the random-effects model and solves for $\tau^2$:
\[
\hat\tau^2_{\text{DL}} = \max\!\left(0,\; \frac{Q - (K-1)}{\sum_k w_k - \sum_k w_k^2 / \sum_k w_k}\right), \qquad Q = \sum_{k=1}^{K-1} w_k (\hat\theta_k - \bar\theta_w)^2
\]
where $w_k = 1/\sigma^2_{k}$ are the inverse-variance weights and $\bar\theta_w = \sum_k w_k \hat\theta_k / \sum_k w_k$ is the fixed-effects weighted mean. This estimator is closed-form and requires no iteration, but truncates at zero and does not use the full distributional information.
 
The alternative is restricted maximum likelihood (REML), which maximizes the marginal likelihood of the data under the assumption $\hat\theta_k \sim \mathcal{N}(\theta, \sigma^2_{k} + \tau^2)$:
\[
\ell_{\text{REML}}(\tau^2) = -\frac{1}{2} \sum_{k=1}^{K-1} \left[\log(\sigma^2_{k} + \tau^2) + \frac{(\hat\theta_k - \bar\theta_{w^*})^2}{\sigma^2_{k} + \tau^2}\right] - \frac{1}{2}\log\!\left(\sum_{k=1}^{K-1} w^*_k\right)
\]
where $w^*_k = 1/(\sigma^2_{k} + \tau^2)$ and $\bar\theta_{w^*}$ is the random-effects weighted mean. The final term is the restriction term that distinguishes REML from profile ML (it is $-\tfrac{1}{2}\log|X^\top\Sigma^{-1}X|$ with the design matrix $X$ a column of ones); dropping it recovers the profile ML likelihood, which is known to be downward-biased for $\tau^2$ at small $K$. REML requires iterative optimization but produces less biased estimates of $\tau^2$, particularly when $K$ is small. For $K > 20$, both estimators give similar results; for $K < 10$, REML is preferred~\cite{veroniki2016}.

\subsection{Bridge 7: Discrimination Failure $\rightarrow$ Proxy Replacement (Limits of Post-hoc Fixes)}
\label{app:bridge-discrimination}

Let $S$ be a score and $\tilde S = h(S)$ where $h$ is \emph{strictly} monotone (so that $h$ never creates ties between distinct values of $S$; weakly monotone $h$ can collapse distinct scores into ties, in which case AUC changes by an amount that depends on the tie-handling convention).
Rank-based discrimination measures depend only on pairwise ordering.

\begin{proposition}[Monotone recalibration cannot improve rank discrimination]
\label{prop:mono-no-auc-gain}
If $h$ is strictly monotone, then ROC AUC is invariant:
\[
\mathrm{AUC}(\tilde S) = \mathrm{AUC}(S).
\]
More generally, any rank correlation (Spearman $\rho$, Kendall $\tau$) between $S$ and $Y$
is invariant under strictly monotone transforms.
\end{proposition}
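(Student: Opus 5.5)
The proof reduces both claims to one observation. Rank-based discrimination measures depend on the scores only through the pairwise order relations $\mathbf{1}\{S_i < S_j\}$, $\mathbf{1}\{S_i = S_j\}$, $\mathbf{1}\{S_i > S_j\}$, and a strictly monotone $h$ preserves (if increasing) or exactly reverses (if decreasing) every such relation. So I will first write each measure explicitly as a functional of these pairwise indicators, and then check that they are unchanged under $h$.

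\textbf{Step 1 (AUC).} I will use the Mann--Whitney representation
\[
\mathrm{AUC}(S) = \Prb(S_1 > S_0) + \tfrac12 \Prb(S_1 = S_0),
\]
where $S_1$ and $S_0$ are independent draws of the score from the laws of $S \mid Y=1$ and $S \mid Y=0$. The empirical AUC is the same average over all positive--negative pairs. Taking $h$ strictly increasing, I will show two event identities:
\[
\{h(S_1) > h(S_0)\} = \{S_1 > S_0\}, \qquad \{h(S_1) = h(S_0)\} = \{S_1 = S_0\}.
\]
The first holds because strict increase gives both implications. The second holds because $h$ is injective, which is exactly where strictness is needed so that no new ties are created. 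The two probabilities therefore coincide, and $\mathrm{AUC}(h(S)) = \mathrm{AUC}(S)$.

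\textbf{Step 2 (decreasing $h$).} If $h$ is strictly decreasing, the strict inequalities flip while ties are still preserved, so $\mathrm{AUC}(h(S)) = 1 - \mathrm{AUC}(S)$. The statement should therefore be read for increasing $h$, which is the relevant case since recalibration maps are nondecreasing. Alternatively, invariance holds for the orientation-free discrimination level $\max(\mathrm{AUC}, 1-\mathrm{AUC})$. I will state this caveat explicitly.

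\textbf{Step 3 (rank correlations).} For Kendall's $\tau$, I will write it as a normalized sum over pairs $(i,j)$ of
\[
\operatorname{sign}(S_i - S_j)\,\operatorname{sign}(Y_i - Y_j).
\]
The tie-corrected $\tau_b$ denominators count ties in $S$, and injectivity of $h$ preserves those ties. Under increasing $h$ each sign term is unchanged, so $\tau$ is unchanged. For Spearman's $\rho$, I will note that it is the Pearson correlation of the rank vectors, including midranks for ties. The rank of $S_i$ equals
\[
\#\{j : S_j < S_i\} + \tfrac12\,\#\{j \ne i : S_j = S_i\} + 1,
\]
which depends only on the pairwise order relations, so the ranks of $h(S)$ equal those of $S$ and $\rho$ is unchanged. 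Under decreasing $h$ both $\tau$ and $\rho$ only change sign, so $|\tau|$ and $|\rho|$ are invariant.

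\textbf{Main obstacle.} The argument has essentially no computational difficulty. The only care needed is in tie handling and orientation: I must verify that the conventions for ties (the $\tfrac12$ credit in AUC, midranks in Spearman, $\tau_b$ corrections) are functions of the tie pattern alone, which strict monotonicity preserves. This is precisely why the statement excludes weakly monotone $h$, as noted just before the proposition.
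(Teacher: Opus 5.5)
Your proposal is correct and follows the same order-preservation argument as the paper. The paper writes AUC as $\Prb(S_1 > S_0)$, uses $S_1 > S_0 \Leftrightarrow h(S_1) > h(S_0)$, and says the rank correlations follow the same way. Your extra care is warranted: that equivalence holds only for strictly \emph{increasing} $h$. As you note, a strictly decreasing $h$ gives $\mathrm{AUC}(h(S)) = 1-\mathrm{AUC}(S)$ and flips the signs of $\tau$ and $\rho$. Your $\tfrac12\Prb(S_1=S_0)$ tie term and your midrank and $\tau_b$ bookkeeping also make explicit what the paper leaves implicit.
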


\begin{proof}
ROC AUC equals $\Prb(S_1 > S_0)$ for independent draws $(S_1\mid Y=1)$ and $(S_0\mid Y=0)$.
If $h$ is strictly monotone, $S_1>S_0 \Leftrightarrow h(S_1)>h(S_0)$, hence the probability
(and thus AUC) is unchanged. Rank correlations follow from the same order-preservation argument.
\end{proof}

\paragraph{Bridge logic.}
If discrimination diagnostics fail (low AUC/low rank correlation), post-hoc calibration
or additive shifts cannot fix the fundamental ranking information loss; the appropriate
action is proxy replacement (retraining, new features, different model), not post-hoc adjustment.

\subsection{When Weighting Fails to Correct Representativity Issues}
\label{app:weighting_fails}
If the ESS is too low or the weighting effectiveness check fails, importance weighting alone cannot recover valid population-level estimates. \texttt{proxymate} flags this condition and recommends one of four fallback strategies:
\begin{enumerate}
\item \emph{Targeted label acquisition:} acquire additional labels in under-represented regions of the covariate space to improve coverage and reduce weight variability.
\item \emph{Narrow the target population:} redefine the target $T$ to the proxy population $P$ (or the well-covered subset), accepting that validation results apply only to the narrower scope.
\item \emph{Sensitivity analysis:} report bounds on the estimand under varying assumptions about the unobserved population, rather than a single point estimate.
\item \emph{Segment-restricted estimates:} report validated estimates only for segments where the labeled sample is representative, with explicit scope limitations.
\end{enumerate}

\section{Extended \texttt{proxymate} Workflow}
\label{sec:extendedworkflow}
Figure~\ref{fig:framework} shows the end-to-end \texttt{proxymate} pipeline. A practitioner supplies data in one of four canonical forms (Section~\ref{sec:data_cases}); the \emph{Planner} routes the request to the applicable subset of the four levels based on what data is available (Section~\ref{sec:validationplanner}); each level runs a \emph{diagnose-then-adjust} loop that produces diagnostic verdicts and, where needed, targeted corrections (Section~\ref{sec:levels_loop}); the pipeline emits a validation report summarising all diagnostics and recommended adjustments (Section~\ref{sec:workflow_output}). Corrections at each level feed forward into subsequent levels, so a representativity gap detected on the input propagates its importance weights into all downstream checks.

\subsection{Data Input}
\label{sec:data_cases}
The four levels of \texttt{proxymate} are not always all executable: which ones apply depends on what data the practitioner supplies. Figure~\ref{fig:framework} summarises four canonical data-input cases, distinguished by (i) whether one or many domains are available and (ii) whether the practitioner provides unit-level data with a (custom) estimator or only pre-computed aggregate estimates.

\begin{enumerate}
\item \textbf{Case 1: Single domain, unit-level data, no estimator.} The practitioner supplies paired $(Y_i, Y^*_i, X_i)$ observations for a single domain without a custom estimator. The \textit{Representativity}, \textit{Unit}, and \textit{Estimate Levels} are all executable; the \textit{Domain Level} is not (a single domain provides no cross-domain variation to meta-analyze).

\item \textbf{Case 2: Single domain, unit-level data, custom estimator.} As Case~1, but the practitioner supplies a custom estimator (e.g., \texttt{AverageTreatmentEffectEstimator} for ATEs, a quantile estimator, or any subclass of \texttt{DfBasedEstimatorBase}). The same three levels are executable, with aggregation performed via the supplied estimator.

\item \textbf{Case 3: Many domains, aggregate summaries only.} The practitioner supplies per-domain aggregate estimates $\{(\hat\theta_k, \hat\theta^*_k, \hat\sigma_{k}, \hat\sigma_{Y^*,k})\}_{k=1}^{K-1}$ without unit-level data (common when historical experiments retained only summary statistics). Only the \textit{Domain Level} is executable, since \textit{Representativity}, \textit{Unit}, and \textit{Estimate Levels} all require paired unit-level $(Y_i, Y^*_i)$ observations.

\item \textbf{Case 4: Many domains, unit-level data, custom estimator.} The full case: paired unit-level data for multiple historical domains plus a custom estimator. All four levels are executable, and per-domain estimates feed forward into the \textit{Domain Level} meta-analysis. This is the input for cross-domain validation with proxy adjustment (Case Study~A, Section~\ref{sec:ipl-case-study}).
\end{enumerate}

Table~\ref{tab:data_cases} summarises which levels are executable in each case.

\begin{table}[t]
\caption{Executable validation levels by data-input case. `\checkmark' indicates a level that can be run; `--' indicates the required data is not available.}
\label{tab:data_cases}
\small
\setlength{\tabcolsep}{4pt}
\begin{tabular}{@{}c c c c || c c c c@{}}
\toprule
\textbf{Case} & \textbf{Domains} & \textbf{Unit data} & \textbf{Estimator} & \textbf{Repr.} & \textbf{Unit} & \textbf{Est.} & \textbf{Dom.} \\
\midrule
1 & single & yes & --      & \checkmark & \checkmark & -- & -- \\
2 & single & yes & custom  & \checkmark & \checkmark & \checkmark & -- \\
3 & many   & no  & --      & -- & -- & -- & \checkmark \\
4 & many   & yes & custom  & \checkmark & \checkmark & \checkmark & \checkmark \\
\bottomrule
\end{tabular}
\end{table}

\subsection{Validation Planner}
\label{sec:validationplanner}

A common barrier to proxy validation is deciding \emph{which} checks to run. The answer depends on two inputs: the \emph{proxy challenge} (why a proxy is needed) and the \emph{data level} (what paired data is available). \texttt{proxymate} identifies four recurring proxy challenges:

\begin{enumerate}
\item \emph{Long maturation primary}: the outcome takes weeks or months to mature.
\item \emph{Restricted labels}: ground truth requires time-consuming human review and the labeled sample may not be representative.
\item \emph{Low detectability}: the outcome is rare or high-variance.
\item \emph{Low sensitivity}: the metric is too blunt to detect small effects.
\end{enumerate}

Each challenge maps to a different subset of levels and checks (Table~\ref{tab:planner}). For example, the long maturation primary challenge routes primarily to \textit{Domain Level} checks (association, heterogeneity, prediction accuracy), since the key question is whether the proxy--primary relationship is stable across historical experiments. The restricted labels challenge instead emphasizes \textit{Representativity} and \textit{Unit Level} checks (calibration, discrimination), since the labeled sample is often biased by the review pipeline. Challenges can be combined (e.g., long maturation primary with low sensitivity).

The validation planner encodes this routing and, given a challenge and data level, returns the recommended set of checks grouped by validation level. Practitioners may also bypass the planner and select checks manually. The planner is a convenience, not a constraint: it lowers the barrier for non-expert users while preserving full flexibility for teams with domain-specific requirements.

\begin{table}[t]
\caption{Validation planner routing table. Each proxy challenge maps to recommended checks; challenges can be combined.}
\label{tab:planner}
\small
\setlength{\tabcolsep}{3pt}
\begin{tabular}{@{}p{2.2cm}p{5cm}@{}}
\toprule
\textbf{Proxy Challenge} & \textbf{Recommended Checks} \\
\midrule
Long maturation primary & Association ($R^2$), effect sensitivity ($\gamma_1$), directional alignment, heterogeneity ($I^2$, $\tau^2$), prediction accuracy (LOO) \\
\addlinespace
Restricted labels & Coverage, sample repr.\ (SMD), ESS, weight extremity, calibration (ECE), discrimination (AUC), aggregate bias \\
\addlinespace
Low detectability & Precision ($R^2$, RMSE), agreement (CCC), inferential validity, aggregate bias \\
\addlinespace
Low sensitivity & Discrimination (AUC, $\tau$), accuracy (t-test), directional alignment, effect sensitivity \\
\bottomrule
\end{tabular}
\end{table}

\subsection{\texttt{Proxymate} Levels: Diagnose-then-Adjust Loop}
\label{sec:levels_loop}

At each level selected by the Planner, \texttt{proxymate} runs a two-step \emph{diagnose-then-adjust} loop (Figure~\ref{fig:framework}). The \textbf{diagnose} step evaluates the level's checks and returns pass/fail verdicts. If all checks pass, no adjustment is applied and the level's output is a clean verdict. If any check fails, the \textbf{adjust} step applies a targeted correction whose assumptions match the specific failure mode (Table~\ref{tab:bridge}), and the level is then re-validated on the corrected data.

The four levels split into two groups reflecting the data they operate on.

\paragraph{Within-domain stage (\textit{Representativity}, \textit{Unit}, \textit{Estimate}).} For each historical domain $k$ with paired unit-level data $(Y_i, Y^*_i, X_i)$, the three within-domain levels run sequentially. \textit{Representativity} first checks whether the labeled sample is representative of the domain population; if a gap is detected, the resulting importance weights are applied to all downstream levels. \textit{Unit Level} evaluates measurement quality on the (possibly reweighted) paired data. \textit{Estimate Level} then assesses aggregate decision validity. Corrections feed forward: representativity weights flow into \textit{Unit} and \textit{Estimate} checks, and unit-level corrections (e.g., recalibrated proxy scores) propagate into aggregate estimation.

\paragraph{Cross-domain stage (\textit{Domain}).} Per-domain validated estimates $\{(\hat\theta_k, \hat\theta^*_k)\}_{k=1}^{K-1}$ from the within-domain stage (or provided directly as aggregate summaries under Case~3) feed into the \textit{Domain Level}, which assesses whether the proxy--primary relationship is stable enough to transport to the target domain~$K$. The same diagnose-then-adjust cycle applies: if cross-domain diagnostics pass, the proxy estimate is used directly; if they fail, regression calibration or estimate-level adjustment is applied to produce a calibrated prediction interval for $\theta_K$.

\subsection{Output}
\label{sec:workflow_output}

The pipeline produces a \emph{validation report} summarising, for each level executed: the diagnostic check results (raw statistic, projected score, pass/fail), the adjustments applied (if any), and the resulting corrected proxy estimates with confidence intervals. When the \textit{Domain Level} is executed, the report additionally includes the target-domain calibrated estimate $\hat\theta^{\text{cal}}_K$ and its predictive interval. The report is designed to be consumed programmatically (as a structured object, for pipeline integration) or as a human-readable summary (for one-off validation).

\section{Extended Related Work} \label{sec:extended_related_work}
\paragraph{Surrogate endpoints and indices.} The foundational work on proxy validation originates in biostatistics. Prentice~\cite{prentice1989} formalized the surrogate endpoint problem and proposed an operational criterion: a surrogate is valid if it fully captures the treatment effect on the primary outcome. This condition relies on strong causal assumptions rarely satisfied in practice. Buyse and Molenberghs~\cite{buyse1998,buyse2000} extended this to a meta-analytic framework, validating surrogates via $R^2_{\text{trial}}$, the proportion of treatment effect variance explained by the surrogate across multiple trials. VanderWeele~\cite{vanderweele2013} distinguished between statistical and causal surrogacy. In the social sciences, Athey et al.~\cite{athey2019} introduced the surrogate index for combining multiple weak surrogates into a strong one, focusing on short-term outcomes as proxies for longer-term effects. \texttt{proxymate} complements these works by validating a given proxy across multiple levels of aggregation and aligning the validation criteria with the downstream application. 

\paragraph{Prediction-powered inference.} Angelopoulos et al.~\cite{angelopoulos2023} introduced Prediction-Powered Inference (PPI), which uses a small labeled dataset to correct the bias of a model evaluated on a large unlabeled dataset, providing valid confidence intervals. PPI++~\cite{angelopoulos2023ppipp} extends this with power-tuning for efficiency gains. Ji et al.~\cite{ji2025} revisit the connection between predictions-as-surrogates and the classical surrogate outcome framework. Kluger et al.~\cite{kluger2025} extend PPI to settings with imputed covariates and nonuniform sampling, partially addressing representativity concerns. These methods are complementary to \texttt{proxymate}: PPI provides an adjustment mechanism (used in our adjustment module), while \texttt{proxymate} provides the diagnostic layer that determines \emph{whether} and \emph{which} adjustment to apply. 

\paragraph{Calibration and post-hoc adjustment.} Classifier calibration, ensuring predicted probabilities match empirical frequencies, is central to proxy quality at the unit level. Platt~\cite{platt1999} introduced sigmoid scaling, while Zadrozny and Elkan~\cite{zadrozny2002} proposed histogram and isotonic calibration. Guo et al.~\cite{guo2017} demonstrated that modern neural networks are frequently miscalibrated and popularized temperature scaling. Naeini et al.~\cite{naeini2015} formalized the expected calibration error (ECE) as a standard diagnostic and \cite{tax2026mcgrad} provides open source software for multicalibration. \texttt{proxymate} uses ECE and related calibration diagnostics at the \textit{Unit Level}, and maps calibration failures to appropriate adjustment methods (Platt scaling, isotonic regression, or temperature scaling depending on model complexity and data availability). 

\paragraph{\textit{Estimate}-level adjustment for proxy inference.} Wilkins-Reeves et al.~\cite{wilkinsreeves2026} introduce an estimate-level framework that models residual proxy bias as a random effect across domains. This adjustment inflates proxy-based confidence intervals to achieve calibrated coverage under random distribution shifts, using only aggregate summaries (no individual-level data required). Crucially, the adjustment is modular: it can be layered on top of any proxy adjustment method (including PPI), accounting for residual biases not addressed by those adjustments. \texttt{proxymate} integrates this estimate-level adjustment as an adjustment strategy for cross-domain transportability and for within-domain aggregate inference when historical domains are available. 

\paragraph{Proxy selection and learning.} Zito et al.~\cite{zito2025} propose Pareto-optimal proxy metrics that balance multiple criteria. Tripuraneni et al.~\cite{tripuraneni2024} formalize proxy selection for experimentation, providing criteria for when a proxy is ``good enough'' to support decision-making. Bibaut et al.~\cite{bibaut2024} study learning the covariance of treatment effects across many weak experiments. \texttt{proxymate} is complementary: it validates a given proxy rather than selecting among candidates, though its multi-level diagnostics can inform selection decisions and incorporate Tripuraneni's criteria. 

\paragraph{Covariate shift and representativity.} The covariate shift literature~\cite{shimodaira2000,sugiyama2007,sugiyama2012} addresses distribution mismatch between training and deployment populations via density ratio estimation and reweighting. Rosenbaum~\cite{rosenbaum2002} and Austin~\cite{austin2011} provide foundational treatments of propensity score methods and standardized mean differences (SMD) for assessing covariate balance. \texttt{proxymate} operationalizes these ideas by integrating representativity diagnostics directly into the validation pipeline: before assessing proxy quality, we first verify that the labeled sample on which quality is evaluated is representative of the target population, using SMD and density ratio diagnostics to flag and optionally correct for selection bias. 

{\sloppy
\paragraph{Prevalence estimation with classifiers.} A body of work addresses population-level inference from classifier outputs. Rogan and Gladen~\cite{rogan1978} provided early methods for prevalence estimation using sensitivity and specificity. Forman~\cite{forman2008} developed quantification learning methods that estimate class proportions without requiring calibrated individual predictions. Wu and Resnick~\cite{wu2024} propose calibrate\-/extrapolate methods for prevalence estimation with black-box classifiers. Nguyen et al.~\cite{nguyen2020clara} develop the CLARA framework for confidence of labels and raters for noisy label problems. These methods address specific instances of the proxy validation problem; \texttt{proxymate} provides the diagnostic scaffolding to determine which adjustment is appropriate in a given setting. \par}

\paragraph{Transportability and meta-analysis.} Pearl and Bareinboim~\cite{pearl2016} formalize transportability as a binary verdict based on causal graph structure; \texttt{proxymate} replaces this with continuous diagnostics based on the heterogeneity machinery of DerSimonian and Laird~\cite{dersimonian1986} ($\tau^2$) and Higgins et al.~\cite{higgins2003} ($I^2$). When heterogeneity is detected, \texttt{proxymate} applies the estimate-level adjustment of Wilkins-Reeves et al.~\cite{wilkinsreeves2026} to inflate confidence intervals appropriately. 

\paragraph{Conformal prediction and distribution-free inference.} Conformal prediction~\cite{vovk2005} and conformal risk control~\cite{angelopoulos2023conformal} provide distribution-free coverage guarantees under exchangeability. These methods construct prediction sets with guaranteed marginal coverage, which could complement proxy validation by providing unit-level uncertainty quantification. Measurement invariance testing from psychometrics assesses whether a measurement instrument functions equivalently across groups, conceptually related to our cross-domain transportability checks. \texttt{proxymate} does not currently integrate conformal methods but they represent a natural extension for distribution-free calibration diagnostics. 

\paragraph{Post-prediction inference.} Wang et al.~\cite{wang2020} formalized the ``post-prediction inference'' problem (using ML-predicted outcomes in downstream statistical analyses) and developed adjustment methods based on modeling the relationship between predictions and true outcomes. Chen et al.~\cite{chen2025} provide a unified framework for inference under general missingness patterns with machine learning imputation. Mozer~\cite{mozer2026} connects PPI to classical survey sampling estimators, showing that PPI is a difference estimator. These works address specific adjustment mechanisms; \texttt{proxymate} provides the diagnostic wrapper that determines when and how to apply them. 

\paragraph{Existing software.} Despite the extensive literature, no existing Python package provides an end-to-end proxy validation workflow. Causal inference packages such as DoWhy~\cite{sharma2020} and EconML focus on treatment effect estimation, not proxy validation. The \texttt{ppi\_py} package implements PPI but does not provide diagnostic checks. The \texttt{ppi\_aipw} package~\cite{vanderlaan2026} adds calibration to PPI but does not address representativity or multi-level validation. Experimentation platforms (GrowthBook, Statsig) offer metric monitoring but lack formal proxy validation. \texttt{proxymate} fills this gap by providing a unified diagnose-then-correct workflow spanning four levels of proxy quality.

\end{document}